\def\Real{\mathop{\mathbb{R}}\nolimits}
\newcommand{\bv}{\boldsymbol{v}}
\newcommand{\bw}{\boldsymbol{w}}
\newcommand{\bx}{\boldsymbol{x}}
\newcommand{\by}{\boldsymbol{y}}
\newcommand{\bz}{\boldsymbol{z}}
\newcommand{\bmu}{\boldsymbol{\mu}}
\newcommand{\bA}{\boldsymbol{A}}
\newcommand{\bX}{\boldsymbol{X}}
\newcommand{\bTheta}{\boldsymbol{\Theta}}
\newcommand{\bSigma}{\boldsymbol{\Sigma}}
\newcommand{\X}{\mathcal{X}}
\newtheorem{thm}{Theorem}
\newtheorem{cor}{Corollary}
\title{Automated Clustering of High-dimensional Data with a Feature Weighted Mean Shift Algorithm\thanks{To appear at the 35-th AAAI Conference on Artificial Intelligence, 2021.}}
\author[1]{Saptarshi Chakraborty\thanks{Joint first authors contributed equally to this work.}}
\author[2]{Debolina Paul$^\dagger$}
 \author[3]{Swagatam Das\thanks{Correspondence to: \href{mailto:swagatam.das@isical.ac.in}{swagatam.das@isical.ac.in}}}
 \affil[1]{Department of Statistics, University of California, Berkeley}
 \affil[2]{Indian Statistical Institute, Kolkata, India}
 \affil[3]{Electronics and Communication Sciences Unit, Indian Statistical Institute, Kolkata, India}
\date{\vspace{-5ex}}
\begin{document}

\maketitle

\begin{abstract}
Mean shift is a simple interactive procedure that gradually shifts data points towards the mode which denotes the highest density of data points in the region. Mean shift algorithms have been effectively used for data denoising, mode seeking, and finding the number of clusters in a dataset in an automated fashion. However, the merits of mean shift quickly fade away as the data dimensions increase and only a handful of features contain useful information about the cluster structure of the data. We propose a simple yet elegant feature-weighted variant of mean shift to efficiently learn the feature importance and thus, extending the merits of mean shift to high-dimensional data. The resulting algorithm not only outperforms the conventional mean shift clustering procedure but also preserves its computational simplicity. In addition, the proposed method comes with rigorous theoretical convergence guarantees and a convergence rate of at least a cubic order. The efficacy of our proposal is thoroughly assessed through experimental comparison against baseline and state-of-the-art clustering methods on synthetic as well as real-world datasets. 
\end{abstract}
\section{Introduction}
Clustering, a cornerstone of unsupervised learning, refers to the task of partitioning a dataset into more than one exhaustive and mutually exclusive groups, based on some measure of similarity \citep{xu2015comprehensive}. Some popular paradigms in clustering include center-based approaches such as $k$-means and its variants \citep{JAIN2010651}, hierarchical clustering \citep{10.5555/1756006.1859898}, spectral clustering \citep{ng2002spectral,hess2019spectacl}, density-based methods \citep{ester1996density}, convex clustering \citep{chi2015splitting}, kernel clustering \citep{dhillon2004kernel}, optimal transport based methods \citep{mi2018variational,chakraborty2020hierarchical}, model-based frequentist approaches \citep{mcnicholas2016model} and Bayesian methods \citep{archambeau2007robust,DBLP:conf/icml/KulisJ12}.

Most of the aforementioned algorithms inherently use the number of clusters ($k$) as an input. However, for real-world data, $k$ may not be known beforehand. Determining $k$ from the dataset itself has long been an open problem and has attracted a lot of attention from the relevant research community \citep{tibshirani2001estimating,hamerly2004learning,fischer2011number,DBLP:conf/icml/KulisJ12, chakraborty2018simultaneous, GUPTA201872, paulbayesian}.

Moreover, algorithms for solving $k$-means type non-convex clustering problems are prone to get stuck at local minima \citep{xu2019power}. Recent attempts to mitigate this issue approach the problem via annealing with a class of functions approximating the $k$-means type objective \citep{xu2019power,chakraborty2020entropy,paul2020kernel} or by taking a convex relaxation of the problem \citep{chi2015splitting,pelckmans2005convex,wang2018sparse}. There are also density-based algorithms like DBSCAN \citep{ester1996density, jiang2017density} or other mode seeking approaches like quick shift \citep{vedaldi2008quick, jiang2017density} which attempts to speed up mean shift. However, these methods either require $k$ as an input or their performance degrade in a high-dimensional setting, where the signal-to-noise-ratio is quite low.  

To find the number of clusters automatically and to learn various properties of the feature space, researchers have resorted to the mean shift (MS) paradigm \citep{cheng1995mean,su2017solving}. Mean shift has previously been used for mode seeking, object tracking, and automated clustering in the feature space.  

Suppose $\X=\{\bx_1,\dots,\bx_n\} \subset \Real^p$ be $n$ data points to be clustered. The mean shift initiates $n$ points $\by_1^{(0)},\dots,\by_n^{(0)}$ and updates $\by_i$ according to the following update rule:
\begin{equation}
\label{eq1}
\by_i^{(t+1)}= \frac{\sum_{j=1}^n K(\|\by_i^{(t)}-\bx_j\|/h) \bx_j}{\sum_{j=1}^n K(\|\by_i^{(t)}-\bx_j\|/h)},    
\end{equation}
until convergence. Here $K(\cdot)$ is a kernel function (e.g. the Gaussian kernel, $K(x)=\exp\{-x^2\}$) and $h$ is the bandwidth parameter. This version of the mean shift is often used to detect the mode(s) of the estimated density. Often, instead of updating based on the original data points $\bx_i$'s, one uses $\by_i^{(t)}$'s, the points obtained in the $t$-th step. In this case, the updates take the following form:
\begin{equation}
    \label{eq2}
    \by_i^{(t+1)}= \frac{\sum_{j=1}^n K(\|\by_i^{(t)}-\by^{(t)}_j\|/h) \by^{(t)}_j}{\sum_{j=1}^n K(\|\by_i^{(t)}-\by^{(t)}_j\|/h)}.
\end{equation}
As before, $y_i^{(0)}$ is initiated at $\bx_i$. This version of mean shift is referred to as Blurring Mean Shift (BMS). The updates in equation \eqref{eq2} can be thought of as putting a low-pass filter on the data and thus ``blurring" out irregularities in the data. Apart from clustering, BMS has been used for data denoising and manifold learning \citep{5539845}.

Despite their simplicity, both the usual and blurring mean shift perform poorly for high-dimensional data. This is primarily because of the use of Euclidean distance, which becomes less informative as the number of feature increases due to the curse of dimensionality \citep{donoho2000high}. High-dimensional datasets often contain only a few relevant/ discriminating features, along with a huge number of irrelevant/noisy features, which severely affect the performance of mean shift and other popular clustering algorithms.  

There is a rich literature on clustering high-dimensional data including subspace clustering \citep{kriegel2009clustering,elhamifar2013sparse}, bi-clustering \citep{chi2017convex}, dimensionality reduction based approaches \citep{jin2016influential}  and data-depth based approaches \citep{sarkar2019perfect}. However, most of these methods are computationally expensive. Towards finding efficient feature representation of high-dimensional data while clustering, weighted $k$-means \citep{huang2005automated} and Sparse $k$-means \citep{witten2010framework} have become benchmark algorithms for learning effective feature representations of such data. However, these methods also require $k$ as input and thus, lose their appeal to the practitioner who may not have handled the data before and wants to find the number of clusters in an unsupervised manner as well.      

This paper aims to develop a blurring mean shift based algorithm, which can automatically find an efficient feature representation of the data as well as the number of clusters \textit{simultaneously}. Called the Weighted Blurring Mean Shift (WBMS), we extend the merits of blurring mean shift to high-dimensional data, while preserving its computational cost. To achieve this, we introduce a feature weight vector to learn the importance of each feature as the data is smoothed. The resulting iterations lead to an elegant and simple algorithm with closed-form updates. The weight update scheme follows the philosophy that features with higher within-cluster variance contribute less in finding the cluster structure of the data \citep{huang2005automated,witten2010framework,chakraborty2020entropy}. The main contributions of this paper can be summarized as follows:
\begin{itemize}
    \item In Section \ref{pf}, we introduce the Weighted Blurring Mean Shift (WBMS) formulation as an intuitive extension of mean shift to high-dimensional data clustering. This simple formulation is found to be effective in finding out the number of clusters and also filter out the unimportant features from the data \textit{simultaneously}.
    \item As shown in Section \ref{implicit}, the obtained feature weights can be interpreted as the outcome of an entropy regularization on the within-cluster sum of squares. 
    \item The WBMS algorithm comes with closed-form updates and its convergence guarantees are discussed in Section~\ref{convergence}. 
    \item We also analyze asymptotic convergence properties of the data cloud under the WBMS algorithm in Section \ref{rate}. We analytically show that the convergence of the spread of the data cloud is at least of cubic order. 
    \item Through detailed experimental analysis, we show the efficacy of our proposed algorithm against state-of-the-art clustering techniques on a suit of simulated and real-world data in Section \ref{exp}. Our experimental results indicate that WBMS is especially effective, compared to its competitors, in a high-dimensional setting despite the low signal-to-noise-ratio.
\end{itemize}

Before proceeding to the details of WBMS, let us now present a motivating example demonstrating its potential. 
\subsubsection{A Motivating Example}
\begin{figure*}[!ht]
    \centering
    \includegraphics[height=0.38\textwidth,width=0.9\textwidth]{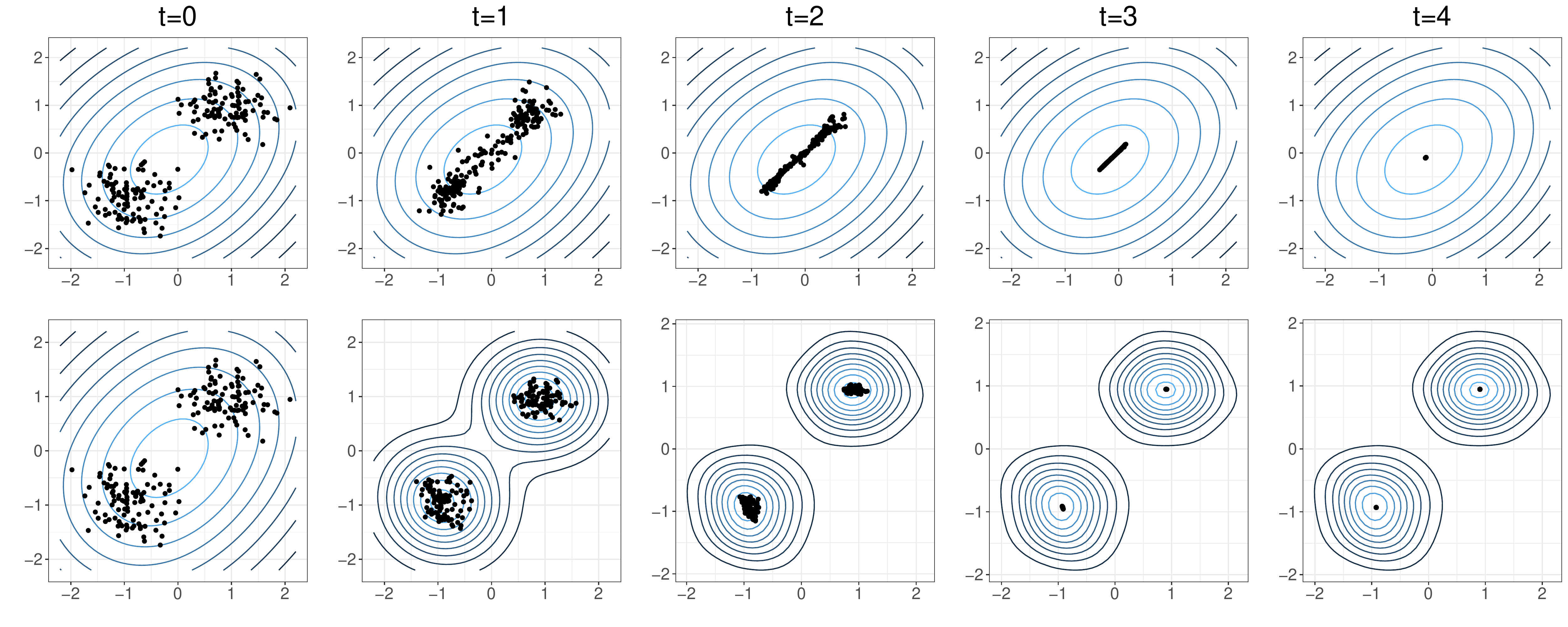}
    \caption{Performance of BMS (first row) and WBMS (second row) along with the contour plots of the estimated kernel density for the motivating example as the number of iterations ($t$) is increased. WBMS correctly identifies the two true cluster centroids and efficiently selects the relevant features, while the BMS fails to do so.}
    \label{fig:motivate}
\end{figure*}
 We generate a $200 \times 32$ dimensional data, called \texttt{data1}, which consists of two clusters with $100$ points each. The cluster structure of \texttt{data1} is fully contained in the first two features and the rest of the $30$ features are independently generated from a standard normal distribution, that contain no clustering information. We standardize ($z$-transform) the data before use and set the bandwidth $h=0.1$. In Fig.~\ref{fig:motivate}, we show the position of the data cloud, plotted in the first two relevant feature dimensions, for both BMS and WBMS as the number of iterations ($t$) is increased.  It can be easily seen that since BMS is more influenced by the combination of $30$ Gaussian features, the data cloud gradually converges to the origin. On the other hand, the WBMS correctly identifies the two important and informative features. Thereby using this information, the data cloud converges to the two cluster centroids. The estimated density of the original data cloud $\mathcal{X}$, $\hat{f}(\bx) \propto \sum_{i=1}^n \exp\{-\|\bx-\bx_i\|^2_{\bw}/h\}$, with $\bw$ being the feature weights found out by WBMS (see Section \ref{pf} for the definition of $\|\cdot\|_{\bw}$), also closely resembles the density of the data without the presence of the $30$ non-informative features.
\section{Weighted Blurring Means Shift}
\label{pf}
In this section, we formulate the Weighted Blurring Mean Shift (WBMS) algorithm and discuss some of its intriguing properties. Throughout this paper, $\mathcal{N}_p(\bmu,\Sigma)$ denotes the $p$-variate normal distribution with mean $\bmu$ and dispersion matrix $\Sigma$. $Unif(A)$ denotes the uniform distribution over the set $A$.
\subsection{Motivation}
Let $\bx_1,\dots,\bx_n \in \Real^p$ be $n$ data points to be clustered. We note that the blurring mean shift updates (equation \eqref{eq2}) uses the Euclidean distance $\|\bx-\by\|_2=\sqrt{\sum_{l=1}^p (x_l-y_l)^2}$. The Euclidean distance puts equal weight on each of the components $(x_l-y_l)^2$ and thus, not suitable when there are many noisy features, which are irrelevant to the clustering of the data. Recent research \citep{witten2010framework,chakraborty2020entropy} has been focused in replacing the usual Euclidean distance with the weighted distance $ \|\bx-\by\|_{\bw}=\sqrt{\sum_{l=1}^p w_l (x_l-y_l)^2}.$
Here $w_l \ge 0$ denotes the feature weight of the $l$-th feature. It can be easily checked that $\|\cdot\|_{\bw}$ defines a norm on $\Real^p$. A large feature weight, $w_l$ on the $l$-th feature gives more importance to the difference $|x_l-y_l|$, the discrimination between $\bx$ and $\by$ along the $l$-th coordinate vector. $\bw$ is called the feature weight vector and is usually normalized, i.e. $\mathbf{1}^\top \bw =1$. The feature weights are typically learned from the data and are updated every passing iteration. Normally, the feature weights are taken as some decreasing function of the within cluster sum of squares for that feature. 
\subsection{Formulation}\label{formulation}
We will use a similar update rule as in BMS (equation \eqref{eq2}). However, instead of the usual Euclidean distance, we will use the weighted distance $\|\cdot\|_{\bw}$. The update rule for the data points is given by,
\begin{equation}
    \label{eq4}
    \by_i^{(t+1)}= \frac{\sum_{j=1}^n K(\|\by_i^{(t)}-\by^{(t)}_j\|_{\bw^{(t)}}/h) \by^{(t)}_j}{\sum_{j=1}^n K(\|\by_i^{(t)}-\by^{(t)}_j\|_{\bw^{(t)}}/h)}.
\end{equation}
The feature weights are updated as follows:
\begin{equation}
    \label{eq5}
    w^{(t)}_l=\frac{\exp\{-\frac{1}{n\lambda}\sum_{i=1}^n(x_{il}-y_{il}^{(t)})^2\}}{\sum_{l^\prime=1}^p\exp\{-\frac{1}{n\lambda}\sum_{i=1}^n(x_{il^\prime}-y_{il^\prime}^{(t)})^2\}}.
\end{equation}
Algorithm \ref{alg1} gives a formal description of the weighted blurring mean shift algorithm.
\begin{algorithm}[h!]
\caption{Weighted Blurring Mean Shift (WBMS) Algorithm}\label{alg1}
\begin{algorithmic}
\State \textbf{Input}:  $\bx_1,\dots,\bx_n \in \Real^p$, $h,\lambda>0$
\State \textbf{Output}: $\by_1,\dots,\by_n$ and $\bw$.
\State Initialize $\by_i^{(0)} \leftarrow \bx_i$ for all $i=1,\dots,n$.
\State Initialize $w_l^{(0)}=\frac{1}{p}$, for all $l=1,\dots,p$.
\Repeat
 \State \textbf{Step 1}: Update $\by_i$'s by,
\[\by_i^{(t+1)} \leftarrow \frac{\sum_{j=1}^n K(\|\by_i^{(t)}-\by^{(t)}_j\|_{\bw^{(t)}}/h) \by^{(t)}_j}{\sum_{j=1}^n K(\|\by_i^{(t)}-\by^{(t)}_j\|_{\bw^{(t)}}/h)}.\]
 \State \textbf{Step 2}: Update $\bw$ by,
 \[w^{(t)}_l=\frac{\exp\{-\frac{1}{n\lambda}\sum_{i=1}^n(x_{il}-y_{il}^{(t)})^2\}}{\sum_{l^\prime=1}^p\exp\{-\frac{1}{n\lambda}\sum_{i=1}^n(x_{il^\prime}-y_{il^\prime}^{(t)})^2\}}.\]
 \Until{$\big|\max_{i,j}\|y_i^{(t+1)}-y_{j}^{(t+1)}\|_2 - \max_{i,j}\|y_i^{(t)}-y_{j}^{(t)}\|_2\big|$ converges}
\end{algorithmic}
\end{algorithm}
\subsection{Detecting the Clusters}
We observe that Algorithm \ref{alg1} outputs $\by_1,\dots,\by_n$, which are proxies for the cluster centroids. One should note that if $\bx_i$ and $\bx_j$ were originally in the same cluster, then $\by_i$ and $\by_j$ should be close to each other in the Euclidean sense, i.e. $\|\by_i-\by_j\|_2$ should be small enough. We construct an undirected graph with the adjacency matrix $\bA=((a_{ij}))$. For a prefixed tolerance $\epsilon$ (in our experiments, we take $\epsilon=10^{-5}$), we will take,
\[
a_{ij}=\begin{cases}
1, & \text{ if } \|\by_i-\by_j\|_2 < \epsilon\\
0, & \text{ Otherwise.}
\end{cases}
\]
Let $\mathcal{G}$ be a graph based on the adjacency matrix $\bA$ and vertices $\bx_1,\dots, \bx_n$. The clusters in $\mathcal{X}$ should ideally correspond to the connected components of $\mathcal{G}$. The number of clusters, $k$, corresponds to the number of connected components of $\mathcal{G}$. The connected components of $\mathcal{G}$ can easily be found by a Depth First Search or a Breadth First Search.  
\subsection{Implicit Entropy Regularization}\label{implicit}
We now show that the weight update scheme presented in equation \eqref{eq5} can be thought of as the outcome of an entropy regularization. At convergence $y_i^{(t)}$ can be treated as the cluster centroid corresponding to $\bx_i$. Thus, the within cluster sum of squares is given by,
\(\frac{1}{n}\sum_{i=1}^n \|\bx_i-\by_i^{(t)}\|_{\bw}^2\). Since we are imposing the constraint $\sum_{l=1}^pw_l=1$, minimization of this within cluster sum of squares will result in a trivial coordinate vector of $\Real^p$. As observed by \citep{chakraborty2020entropy}, this problem can be successfully avoided by adding an entropy incentive term as:
\begin{equation}\label{eqs1}
    \frac{1}{n}\sum_{i=1}^n \|\bx_i-\by_i^{(t)}\|_{\bw}^2 + \lambda \sum_{l=1}^p w_l \log w_l,
\end{equation}
where $\lambda>0$. Note that the second term in the above expression is the negative of Shannon's entropy of $\bw$. Minimizing \eqref{eqs1} subject to $\bw^\top \mathbf{1}=1$, results in the weight update formula, given in equation \eqref{eq5}. This is assured by the following theorem.
\begin{thm}
Let $\bw^\ast$ be the minimizer of \eqref{eqs1}, subject to $\bw^\top \mathbf{1}=1$. Then, 
\[w_l^\ast=\frac{\exp\{-\frac{1}{n\lambda}\sum_{i=1}^n(x_{il}-y_{il}^{(t)})^2\}}{\sum_{l^\prime=1}^p\exp\{-\frac{1}{n\lambda}\sum_{i=1}^n(x_{il^\prime}-y_{il^\prime}^{(t)})^2\}}.\]
\end{thm}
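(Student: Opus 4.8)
The plan is to recognize \eqref{eqs1} as a standard constrained convex minimization and to solve it via Lagrange multipliers. First I would expand the weighted norm so that the objective becomes an explicit function of the weight vector $\bw$. Writing $S_l = \frac{1}{n}\sum_{i=1}^n (x_{il}-y_{il}^{(t)})^2$ for the (scaled) within-cluster dispersion of the $l$-th feature, the objective reads $\sum_{l=1}^p w_l S_l + \lambda \sum_{l=1}^p w_l \log w_l$. Since the first term is linear in $\bw$ and $w_l \log w_l$ is strictly convex on $(0,\infty)$, the whole objective is strictly convex, while the feasible set $\{\bw : \mathbf{1}^\top \bw = 1\}$ is affine; hence any stationary point of the Lagrangian will be the unique global minimizer.

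Next I would form the Lagrangian $L(\bw,\mu) = \sum_{l=1}^p w_l S_l + \lambda \sum_{l=1}^p w_l \log w_l - \mu(\mathbf{1}^\top \bw - 1)$ and impose the first-order stationarity condition $\partial L / \partial w_l = S_l + \lambda(\log w_l + 1) - \mu = 0$. Solving for $w_l$ yields $w_l = \exp\{(\mu-\lambda)/\lambda\}\,\exp\{-S_l/\lambda\}$, so that at optimality each weight is proportional to $\exp\{-S_l/\lambda\}$ with a common proportionality constant that absorbs the multiplier $\mu$.

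Finally I would pin down that constant using the constraint $\sum_{l=1}^p w_l = 1$, which forces the normalizer to equal $\sum_{l'=1}^p \exp\{-S_{l'}/\lambda\}$. Substituting back the definition of $S_l$ then gives exactly the claimed expression for $w_l^\ast$. I expect no serious obstacle here: the exponential form automatically satisfies the implicit nonnegativity requirement $w_l > 0$, so the inequality constraints are inactive and need not be carried explicitly, and the strict convexity established in the first step already certifies that the unique stationary point is the global minimizer rather than a maximizer or a saddle point.
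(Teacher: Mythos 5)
Your proposal is correct and follows essentially the same route as the paper: rewrite the objective as $\sum_l D_l w_l + \lambda \sum_l w_l \log w_l$, form the Lagrangian for the equality constraint, solve the stationarity condition to get $w_l \propto \exp\{-D_l/\lambda\}$, and normalize, with nonnegativity holding automatically. Your explicit appeal to strict convexity to certify that the stationary point is the unique global minimizer is a small but welcome tightening that the paper leaves implicit.
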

\begin{proof}
Problem $(5)$ can be rewritten as follows:
\[
\min_{\bw}\sum_{l=1}^p D_lw_l +\lambda \sum_{l=1}^p w_l \log w_l
\]
where $D_l=\frac{1}{n}\sum_{i=1}^n(x_{il}-y_{il}^{(t)})^2$. 

This problem can now be solved using the constraints $\bw^\top \mathbf{1}=1$, $w_l\geq 0 \text{ } \forall l=1, \dots, p$. The Lagrangian is thus given by,
\[
L= \sum_{l=1}^p D_lw_l +\lambda \sum_{l=1}^p w_l \log w_l - \alpha \bigg(\sum_{l=1}^pw_l-1\bigg)
\]
Putting $\frac{\partial L}{\partial w_l}=0$, we have,
$$D_l+\lambda(1+\log w_l)-\alpha=0$$
Thus, we have $w_l \propto \exp(-\frac{D_l}{\lambda})$. 

Using the condition $\bw^\top \mathbf{1}=1$, we have, 
\begin{align*}
    w_l^\ast & = \frac{\exp(-\frac{D_l}{\lambda})}{\sum_{l'=1}^p\exp(-\frac{D_l'}{\lambda})}\\
    & = \frac{\exp\{-\frac{1}{n\lambda}\sum_{i=1}^n(x_{il}-y_{il}^{(t)})^2\}}{\sum_{l^\prime=1}^p\exp\{-\frac{1}{n\lambda}\sum_{i=1}^n(x_{il^\prime}-y_{il^\prime}^{(t)})^2\}}.
\end{align*}
This solution to the relaxed problem also satisfies the non-negativity condition. Hence the result.
\end{proof}
\section{Theoretical Properties and Convergence Guarantees}
\label{theo}
We will now discuss some of the interesting properties of WBMS. In particular, we prove that WBMS converges after a finite number of iterations for any fixed tolerance. We also find the asymptotic rate of convergence of the cluster variance as the number of points. Some related theoretical work in this area can be found in \citep{carreira2006fast,chen2015convergence,huang2018convergence,rocha2020towards}.
\subsection{Convergence Guarantee}
\label{convergence}
In this section, we will discuss some of the theoretical properties of the WBMS algorithm. For any set $ A \subseteq \Real^p$, let $\mathcal{C}(A)$ denote the convex hull of $A$. We begin our analysis by proving that the convex hulls of $\{\by^{(t)}_1,\dots,\by_n^{(t)}\}$ for a decreasing sequence of sets in Theorem~\ref{thm2}. 
\begin{thm}\label{thm2}
Let $C_t = \mathcal{C} (\{\by^{(t)}_1,\dots,\by_n^{(t)}\})$. Then $\{C_t\}_{t=0}^\infty$ constitutes a decreasing sequence of sets, i.e. 
\[C_0 \supseteq C_1 \supseteq \dots C_t \supseteq C_{t+1} \supseteq \dots\]
\end{thm}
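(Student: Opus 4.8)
The plan is to show that each updated point $\by_i^{(t+1)}$ is a convex combination of the current cloud $\{\by_1^{(t)},\dots,\by_n^{(t)}\}$, so that it already lies inside $C_t$; since $C_t$ is convex, it must then contain the convex hull of all the updated points, which is precisely $C_{t+1}$. Iterating over $t$ then delivers the nested chain.

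First I would rewrite the Step-1 update rule by introducing the normalized coefficients
\[
c_{ij}^{(t)} = \frac{K(\|\by_i^{(t)}-\by_j^{(t)}\|_{\bw^{(t)}}/h)}{\sum_{r=1}^n K(\|\by_i^{(t)}-\by_r^{(t)}\|_{\bw^{(t)}}/h)},
\]
so that $\by_i^{(t+1)} = \sum_{j=1}^n c_{ij}^{(t)}\by_j^{(t)}$. The crucial observation is that, for each fixed $i$, these coefficients form a probability vector over $j$: because the kernel is non-negative (for the Gaussian kernel $K(x)=\exp\{-x^2\}>0$), each $c_{ij}^{(t)}\ge 0$, and by construction $\sum_{j=1}^n c_{ij}^{(t)}=1$. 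Hence $\by_i^{(t+1)}$ is a genuine convex combination of $\by_1^{(t)},\dots,\by_n^{(t)}$, and therefore $\by_i^{(t+1)}\in C_t$ for every $i=1,\dots,n$.

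Next I would invoke the defining property of the convex hull: $C_t$ is convex and contains each of the points $\by_1^{(t+1)},\dots,\by_n^{(t+1)}$, so it contains the smallest convex set enclosing them, namely $C_{t+1}=\mathcal{C}(\{\by_1^{(t+1)},\dots,\by_n^{(t+1)}\})$. This gives $C_{t+1}\subseteq C_t$, and since $t$ was arbitrary we obtain the decreasing chain $C_0\supseteq C_1\supseteq\dots$ as stated.

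The argument is essentially immediate once the update is recognized as a weighted averaging operation, so there is no serious obstacle to overcome; the only point requiring care is the non-negativity of the kernel, which is what guarantees that the $c_{ij}^{(t)}$ are legitimate convex weights rather than arbitrary signed coefficients. I would flag this as the single hypothesis the proof genuinely uses, and remark that it holds for the Gaussian kernel—and indeed for any non-negative kernel—assumed throughout the paper. Note also that the feature weights $\bw^{(t)}$ enter only through the scalar coefficients $c_{ij}^{(t)}$, so the weighted norm plays no role beyond reweighting those nonnegative coefficients, and the conclusion is insensitive to the particular $\bw^{(t)}$ produced at each step.
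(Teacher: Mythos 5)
Your proof is correct and follows essentially the same route as the paper's: recognize the update in equation \eqref{eq4} as a convex combination of $\by_1^{(t)},\dots,\by_n^{(t)}$, conclude $\by_i^{(t+1)}\in C_t$, and invoke minimality of the convex hull to get $C_{t+1}\subseteq C_t$. Your version is slightly more careful in spelling out that the non-negativity of the kernel is what makes the coefficients $c_{ij}^{(t)}$ legitimate convex weights, a point the paper leaves implicit.
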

\begin{proof}
From equation \eqref{eq4}, we note that $\by_{i}^{(t+1)}$ can be represented as a convex combination of $\by_1^{(t)},\dots,\by_n^{(t)}$. This implies that $\by_i^{(t+1)} \in C_t$, for all $i =1,\dots,n$. Thus $C_t$ is a convex set, which contains $\{\by_1^{(t+1)},\dots,\by_n^{(t+1)}\}$. Since $C_{t+1}$ is the smallest convex set containing $\{\by_1^{(t+1)},\dots,\by_n^{(t+1)}\}$, $C_{t} \supseteq C_{t+1}$.
\end{proof}
Since $\{C_t\}_{t=0}^\infty$ forms a decreasing sequence of sets, we immediately get that $C_t$ converges in the following corollary.
\begin{cor}\label{cor1}
$\lim_{t \to \infty} C_t$ exists and is given by,
\(\lim_{t \to \infty} C_t = \cap_{t=1}^\infty C_t.\)
\end{cor}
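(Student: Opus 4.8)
The plan is to reduce the corollary to the standard fact that a monotone sequence of sets converges to its one-sided limit, with Theorem~\ref{thm2} supplying exactly the monotonicity that is needed. Recall that the set-theoretic limit of $\{C_t\}$ exists precisely when $\liminf_{t\to\infty} C_t = \limsup_{t\to\infty} C_t$, where $\liminf_{t\to\infty} C_t = \bigcup_{n=1}^\infty \bigcap_{t\geq n} C_t$ and $\limsup_{t\to\infty} C_t = \bigcap_{n=1}^\infty \bigcup_{t\geq n} C_t$, and in that case the common value is the limit. So the entire task is to evaluate these two expressions under the nesting $C_0 \supseteq C_1 \supseteq \cdots$ guaranteed by Theorem~\ref{thm2}.

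First I would compute the $\liminf$. Because the sequence is decreasing, the tail intersection is insensitive to where the tail begins: for every $n$ one has $\bigcap_{t\geq n} C_t = \bigcap_{t=1}^\infty C_t$, since the finitely many larger sets $C_1,\dots,C_{n-1}$ each contain $C_n$ and hence contribute nothing to the intersection. Taking the union over $n$ of this constant family then yields $\liminf_{t\to\infty} C_t = \bigcap_{t=1}^\infty C_t$. Next I would treat the $\limsup$ in the same spirit: by monotonicity the tail union collapses to its largest member, $\bigcup_{t\geq n} C_t = C_n$, so $\limsup_{t\to\infty} C_t = \bigcap_{n=1}^\infty C_n = \bigcap_{t=1}^\infty C_t$. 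Since the two coincide, the limit exists and equals $\bigcap_{t=1}^\infty C_t$, which is the claim.

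There is essentially no deep obstacle here; the only points requiring care are bookkeeping ones. I would make sure that the index at which the intersection starts is immaterial (starting at $t=0$ versus $t=1$ gives the same set because $C_0 \supseteq C_1$), and I would note that the intersection is genuinely nonempty rather than vacuous: each $C_t$ is the convex hull of finitely many points and is therefore a nonempty compact set, so Cantor's intersection theorem guarantees $\bigcap_{t=1}^\infty C_t \neq \emptyset$. If one prefers to read ``$\lim_t C_t$'' as convergence in the Hausdorff metric on compact convex sets rather than as the set-theoretic limit, the same compactness lets me upgrade the conclusion: for a decreasing sequence of nonempty compact sets the Hausdorff distance to the intersection tends to zero, so the metric limit exists and again equals $\bigcap_{t=1}^\infty C_t$.
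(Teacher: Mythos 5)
Your proof is correct and takes essentially the same route as the paper: the paper's proof simply invokes the monotone convergence theorem for sets on the decreasing sequence supplied by Theorem~\ref{thm2}, while you open up that black box and give the standard proof of it, showing $\liminf_{t\to\infty} C_t = \limsup_{t\to\infty} C_t = \bigcap_{t=1}^\infty C_t$ under the nesting. Your added remarks---nonemptiness of the intersection via compactness of the convex hulls and Cantor's intersection theorem, and the upgrade to convergence in the Hausdorff metric---are correct and strengthen the statement beyond what the paper records, but the core argument is the same.
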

\begin{proof}
$\{C_t\}_{t=1}^\infty$ constitutes a decreasing sequence of sets. The result directly follows from applying monotone convergence theorem for sets \citep{rudin1964principles}.
\end{proof}
In Theorem~\ref{thm3}, we derive that for any fixed tolerance, the convergence criterion of Algorithm~\ref{alg1} is satisfied after a number of finite iterations.
\begin{thm}\label{thm3}
For any pre-fixed tolerance level $\delta$, there exists $T \in \mathbb{N}$ such that \[\big|\max_{i,j}\|y_i^{(t+1)}-y_{j}^{(t+1)}\|_2 - \max_{i,j}\|y_i^{(t)}-y_{j}^{(t)}\|_2\big| < \delta,\] for all $t \ge T$.
\end{thm}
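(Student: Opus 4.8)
The plan is to recognize the quantity $\max_{i,j}\|\by_i^{(t)}-\by_j^{(t)}\|_2$ as the Euclidean diameter of the set $\{\by_1^{(t)},\dots,\by_n^{(t)}\}$, and then to argue that this diameter forms a bounded, monotone real sequence whose consecutive differences must vanish. Write $d_t = \max_{i,j}\|\by_i^{(t)}-\by_j^{(t)}\|_2$. The first step is to establish $d_t = \mathrm{diam}(C_t)$, where $\mathrm{diam}$ denotes the diameter of the convex hull under $\|\cdot\|_2$. The inequality $d_t \le \mathrm{diam}(C_t)$ is immediate since each $\by_i^{(t)} \in C_t$. For the reverse, I would take arbitrary $\bp,\bq \in C_t$, write them as convex combinations $\bp = \sum_i \alpha_i \by_i^{(t)}$ and $\bq = \sum_j \beta_j \by_j^{(t)}$, and use the triangle inequality together with convexity of the norm to obtain $\|\bp-\bq\|_2 \le \sum_{i,j}\alpha_i\beta_j\|\by_i^{(t)}-\by_j^{(t)}\|_2 \le d_t$, so that $\mathrm{diam}(C_t)\le d_t$.

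Next I would invoke Theorem~\ref{thm2}. Since $C_{t+1}\subseteq C_t$, monotonicity of the diameter under set inclusion yields $\mathrm{diam}(C_{t+1})\le \mathrm{diam}(C_t)$, that is, $d_{t+1}\le d_t$. Hence $\{d_t\}_{t=0}^\infty$ is a non-increasing sequence of reals, and it is trivially bounded below by $0$. By the monotone convergence theorem for real sequences, $d_t$ converges to some limit $d^\ast \ge 0$.

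Finally, a convergent real sequence is Cauchy, so $|d_{t+1}-d_t|\to 0$ as $t\to\infty$. Therefore, given any tolerance level $\delta>0$, there exists $T\in\mathbb{N}$ such that $|d_{t+1}-d_t|<\delta$ for all $t\ge T$, which is precisely the stated convergence of the stopping criterion in Algorithm~\ref{alg1}.

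The main obstacle---indeed the only step that is not purely formal---is the identification $d_t=\mathrm{diam}(C_t)$, namely the fact that the diameter of the convex hull of finitely many points is attained among the generating points themselves. Once this is secured, the nestedness supplied by Theorem~\ref{thm2} and the order-completeness of $\Real$ carry out the rest; no further structure of the specific WBMS updates is required beyond what is already encoded in the decreasing sequence of convex hulls.
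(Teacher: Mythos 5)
Your proof is correct, but it reaches the convergence of the sequence $a_t = \max_{i,j}\|\by_i^{(t)}-\by_j^{(t)}\|_2$ by a genuinely different route than the paper. The paper's proof goes: nestedness of the hulls (Theorem~\ref{thm2}) $\Rightarrow$ set-theoretic convergence of $C_t$ (Corollary~\ref{cor1}) $\Rightarrow$ ``the vertices of $C_t$ converge'' $\Rightarrow$ $a_t$, being a function of the vertices, converges $\Rightarrow$ the sequence is Cauchy, hence the stopping criterion is eventually met. You instead prove the identity $a_t = \mathrm{diam}(C_t)$ via the convex-combination estimate $\|\bp-\bq\|_2 \le \sum_{i,j}\alpha_i\beta_j\|\by_i^{(t)}-\by_j^{(t)}\|_2$, then use monotonicity of the diameter under the inclusion $C_{t+1}\subseteq C_t$ to get that $\{a_t\}$ is non-increasing and bounded below by $0$, hence convergent by the monotone convergence theorem for real sequences, hence Cauchy. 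What your approach buys is rigor and economy: the paper's intermediate claim that set convergence of $C_t$ forces its \emph{vertices} to converge is vague as stated (the number of extreme points can change across iterations, and monotone set convergence does not by itself give convergence of extreme points in any metric sense), whereas every step of yours is elementary and airtight; your extra lemma---that the diameter of a convex hull is attained at the generating points---is exactly the price paid to bypass that weak link. In addition, your argument shows slightly more than the theorem asks: $\{a_t\}$ is monotone convergent, not merely Cauchy, which also explains why the stopping criterion of Algorithm~\ref{alg1} measures a quantity that can only shrink.
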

\begin{proof}
From Corollary \ref{cor1}, $\{C_t\}_{t=1}^\infty$. This implies that the vertices of $C_t$ converges. Let  $a_t=\max_{i,j}\|y_i^{(t)}-y_{j}^{(t)}\|_2$. Since $a_t$ is a function of the vertices of $C_t$, $\{a_t\}_{t=1}^\infty$ also converges.  Since $\{a_t\}_{t=1}^\infty$ is a convergent sequence of reals, $\{a_t\}_{t=1}^\infty$ is Cauchy \citep{rudin1964principles}. Thus for any $\delta>0$, there exists $T \in \mathbb{N}$ such that \(\big|\max_{i,j}\|y_i^{(t+1)}-y_{j}^{(t+1)}\|_2 - \max_{i,j}\|y_i^{(t)}-y_{j}^{(t)}\|_2\big| < \delta,\) for all $t \ge T$.
\end{proof}
\subsection{Convergence Rate}
\label{rate}
Let us now discuss the behavior of a Gaussian cluster under WBMS. We will show that the Gaussian cluster shrinks towards its mean with at least a cubic convergence rate. Let $\phi(\bx;\bmu,\bSigma)$ denote the Gaussian probability density function with mean $\bmu$ and dispersion matrix $\bSigma$. In order to remove the dependency on the random process, we take an infinite sample, distributed in the whole of $\Real^p$ according to the density $q(\bx)$. For simplicity, we consider the Gaussian kernel. The kernel density estimate at $\bz$, based on the data $\by_1^{(t)},\dots,\by_n^{(t)}$ is given by,
\[\hat{p}_t(\bz) = \frac{c(h,\bw^{(t)})}{n} \sum_{j=1}^n \exp\big\{-\|\bz-\by_j^{(t)}\|^2_{\bw^{(t)}}/h\big\}.\]
Here $c(h,\bw^{(t)})$ is a constant depending only on $h$ and $\bw^{(t)}$. From equation \eqref{eq4}, we get, 
\begin{align*}
    \by_i^{(t+1)}  & = \frac{\sum_{j=1}^n \exp\{-\|\by_i^{(t)}-\by^{(t)}_j\|^2_{\bw^{(t)}}/h\} \by^{(t)}_j}{\sum_{j=1}^n \exp\{-\|\by_i^{(t)}-\by^{(t)}_j\|^2_{\bw^{(t)}}/h\}}\\
    & = \frac{c(h,\bw^{(t)})}{n} \sum_{j=1}^n \frac{\exp\{-\|\by_i^{(t)}-\by^{(t)}_j\|^2_{\bw^{(t)}}/h\} \by^{(t)}_j}{\frac{c(h,\bw^{(t)})}{n}\sum_{j=1}^n \exp\{-\|\by_i^{(t)}-\by^{(t)}_j\|^2_{\bw^{(t)}}/h\}}\\
    & = \frac{c(h,\bw^{(t)})}{n} \sum_{j=1}^n \frac{\exp\{-\|\by_i^{(t)}-\by^{(t)}_j\|^2_{\bw^{(t)}}/h\} \by^{(t)}_j}{\hat{p}_t(\by_i^{(t)})}\\
    & \approx c(h,\bw^{(t)})\int \by \frac{\exp\{-\|\by_i^{(t)}-\by\|^2_{\bw^{(t)}}/h\}}{p_t(\by_i^{(t)})} q_t(\by) d\by\\
    & = \int \by \frac{\phi(\by_i^{(t)}-\by; 0, \frac{h}{2} diag(\frac{1}{\bw^{(t)}}))}{p_t(\by_i^{(t)})} q_t(\by) d\by.
\end{align*}
Thus, if the number of samples is large, each data point $\bz$ is replaced by the conditional expectation 
\(E(\by|\bz) = \int \by p_t(\by|\bz) d\by,\)
where, $p_t(\by|\bz) = \phi(\bz-\by; 0, \frac{h}{2}diag(1/\bw^{(t)})) q_t(\by)/p_t(\bz)$. For simplicity of exposition, we begin with $\bx$, which follows a Gaussian distribution with mean $\mathbf{0}$ and dispersion matrix $\bSigma=diag(\sigma_1^2,\dots,\sigma^2_p)$. From the above analysis, it is clear that at a population level, the WBMS can be thought of as taking consecutive conditional expectations w.r.t $p_t(\by|\bx_t)$. Here $\bx_t$ denote the population at the $t$-th step of the algorithm with $\bx_0=\bx$. The following theorem asserts that $\bx_t$ is also normally distributed. 
\begin{thm}\label{thm4}
Let $\bx_0 \sim \mathcal{N}_p(\mathbf{0}, \Sigma)$ and $\bx_{t+1}=\int \by p_t(\by|\bx_t) d\by$. Here $p_t(\cdot)$ denotes the distribution of $\bx_t$ and $p_t(\by|\bz) = \phi(\bz-\by; 0, \frac{h}{2}diag(1/\bw^{(t)})) q_t(\by)/p_t(\bz)$. 
Then, $\bx_t \sim \mathcal{N}_p(\mathbf{0}, diag((s_1^{(t)})^2,\dots,(s_p^{(t)})^2))$, with 
\begin{equation}
\label{eq6}
s_l^{(t+1)}= (1+h(s^{(t)}_l)^2/2w_l^{(t)})^{-1}s_l^{(t)}.  
\end{equation}
\end{thm}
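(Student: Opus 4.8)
The plan is to argue by induction on $t$, reducing the entire statement to a single conjugate-Gaussian computation carried out coordinate by coordinate. For the base case $t=0$ there is nothing to prove: $\bx_0\sim\mathcal{N}_p(\mathbf{0},\Sigma)$ with $\Sigma=\mathrm{diag}(\sigma_1^2,\dots,\sigma_p^2)$, so we may set $s_l^{(0)}=\sigma_l$. For the inductive step I would assume that $\bx_t\sim\mathcal{N}_p(\mathbf{0},\mathrm{diag}((s_1^{(t)})^2,\dots,(s_p^{(t)})^2))$, i.e.\ that the step-$t$ population density (written both $p_t$ and $q_t$ in the statement) is this diagonal Gaussian, and then show that a single application of the map $\bz\mapsto\int\by\,p_t(\by\mid\bz)\,d\by$ preserves the zero-mean diagonal-Gaussian form while shrinking the scales exactly as in \eqref{eq6}.

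The key observation is that, as a density in $\by$, $p_t(\by\mid\bz)\propto\phi\big(\bz-\by;\mathbf{0},\tfrac{h}{2}\mathrm{diag}(1/\bw^{(t)})\big)\,q_t(\by)$ is precisely the posterior of the conjugate model with prior $\by\sim\mathcal{N}_p(\mathbf{0},\mathrm{diag}((s_l^{(t)})^2))$ and $\bz\mid\by\sim\mathcal{N}_p(\by,\tfrac{h}{2}\mathrm{diag}(1/\bw^{(t)}))$, the denominator $p_t(\bz)$ being exactly the convolution that normalises it. Since both covariances are diagonal, the product of the two Gaussians factorises across coordinates, so I would simply complete the square in each $y_l$. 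This yields a Gaussian posterior whose mean is linear in $\bz$, namely $\int\by\,p_t(\by\mid\bz)\,d\by=A\bz$ with $A=\mathrm{diag}(c_1,\dots,c_p)$ and
\[
c_l=\frac{(s_l^{(t)})^2}{(s_l^{(t)})^2+h/(2w_l^{(t)})}=\Big(1+\frac{h}{2w_l^{(t)}(s_l^{(t)})^2}\Big)^{-1}.
\]
As a sanity check consistent with Section~\ref{rate}, note that $c_l\to0$ like $\tfrac{2w_l^{(t)}}{h}(s_l^{(t)})^2$ when $s_l^{(t)}\to0$, so the scales contract at a cubic rate.

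Since $\bx_{t+1}=A\bx_t$ is a fixed linear image of the Gaussian vector $\bx_t$, it is again Gaussian with mean $A\mathbf{0}=\mathbf{0}$ and covariance $A\,\mathrm{diag}((s_l^{(t)})^2)\,A^\top=\mathrm{diag}(c_l^2(s_l^{(t)})^2)$; reading off the $l$-th scale gives $s_l^{(t+1)}=c_l\,s_l^{(t)}$, which is the recursion \eqref{eq6}, closing the induction. The step I expect to require the most care is making the population update rigorous rather than heuristic: the derivation preceding the theorem passes from the empirical average in \eqref{eq4} to the integral $\int\by\,p_t(\by\mid\bz)\,d\by$ only in the large-sample limit (the ``$\approx$'' there), so one must be precise that $p_t$ and $q_t$ are the \emph{same} diagonal Gaussian and that $p_t(\bz)$ is the normalising convolution, so that $p_t(\cdot\mid\bz)$ is a genuine probability density and the conjugate computation applies verbatim. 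Everything else—the coordinatewise completion of the square and the pushforward of a Gaussian under the diagonal linear map $A$—is routine.
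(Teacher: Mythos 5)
Your proof takes essentially the same route as the paper's: induction on $t$, with the inductive step being the coordinatewise conjugate-Gaussian computation (complete the square in each $y_l$, note that the posterior mean is a diagonal linear map $\bz \mapsto A\bz$, and push the Gaussian law of $\bx_t$ forward through $A$). The paper performs exactly this calculation for the step $t=0\to 1$ and then appeals to induction, so structurally the two arguments coincide, and your added care about $p_t$ versus $q_t$ and the normalizing convolution is a fair tightening of what the paper leaves implicit.

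One point needs attention, though it is a correction to the paper rather than to your algebra. The shrinkage factor you derive, $c_l = \big(1+\tfrac{h}{2w_l^{(t)}(s_l^{(t)})^2}\big)^{-1}$, is \emph{not} the factor printed in \eqref{eq6}, which reads $\big(1+\tfrac{h(s_l^{(t)})^2}{2w_l^{(t)}}\big)^{-1}$: the position of $(s_l^{(t)})^2$ differs. Your version is the right one. Starting from the unsimplified posterior-mean coefficient $\tfrac{2w_l^{(t)}/h}{2w_l^{(t)}/h+1/(s_l^{(t)})^2}$ (which the paper also obtains), clearing denominators places $(s_l^{(t)})^2$ underneath $h/(2w_l^{(t)})$, exactly as you have it; the paper's proof commits an algebra slip in its final simplification and \eqref{eq6} inherits it. The giveaway is that the very next theorem computes the order of convergence via $\lim_{t\to\infty}(s_l^{(t)})^{3-m}/\big((s_l^{(t)})^2+\tfrac{h}{2w_l}\big)$, i.e., it uses $s_l^{(t+1)}=(s_l^{(t)})^3/\big((s_l^{(t)})^2+\tfrac{h}{2w_l}\big)$, which is precisely your recursion, whereas the printed \eqref{eq6} would yield only sublinear (certainly not cubic) decay of $s_l^{(t)}$. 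So your closing claim that $s_l^{(t+1)}=c_l\,s_l^{(t)}$ ``is the recursion \eqref{eq6}'' should instead state that it is the corrected form of \eqref{eq6}, with the discrepancy flagged as a transcription error in the theorem statement --- your own sanity check against the cubic rate of Section \ref{rate} already proves the point.
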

\begin{proof}We first find the distribution of $\bx_t$.
\begingroup
\allowdisplaybreaks
\begin{align*}
     p_1(\by|\bx_0)  & =\phi\left(\bx_0-\by; 0, \frac{h}{2}diag(1/\bw^{(0)})\right) q_0(\by)/p_t(\bx_0)\\
    & \propto \phi\left(\bx_0-\by; 0, \frac{h}{2}diag(1/\bw^{(0)})\right) q_0(\by)\\
    & \propto \exp\{- \frac{1}{h}\sum_{l=1}^p w_l^{(0)} (y_l-x_l^{(0)})^2\} \exp\{-\sum_{l=1}^p y_l^2/(2 \sigma_l^2)\}\\
    & \propto \exp\bigg\{- \frac{1}{2}\sum_{l=1}^p \frac{ \bigg(y_l-\frac{2 w_l^{(0)} x_l^{(0)}/h}{2w_l^{(0)}/h+1/\sigma_l^2}\bigg)^2}{ \frac{1}{(2w_l^{(0)}/h+1/\sigma_l^2)}}\bigg\}.
\end{align*}
\endgroup
Thus, 
$\bx_1 = E(\by|\bx_0) = \big(\frac{2 w_1^{(0)} x_1^{(0)}/h}{2w_1^{(0)}/h+1/\sigma_1^2},\dots, \frac{2 w_p^{(0)} x_p^{(0)}/h}{2w_p^{(0)}/h+1/\sigma_p^2}\big)$. Thus $\bx_1$ is also a Gaussian distribution, with mean $\mathbf{0}$ and dispersion matrix as
$\text{Var}(\bx_1)=diag((s_1^{(1)})^2,\dots,(s_p^{(1)})^2)$, with 
\[s_l^{(1)}=\frac{2 w_l^{(0)} s_0 /h}{2w_l^{(0)}/h+1/\sigma_l^2} = \frac{1}{1+\frac{h}{2w_l^{(0)}}(s^{(0)}_l)^2}s_l^{(0)}.\]
Here $s^{(0)}_l=\sigma_l$. By an inductive argument, it is easy to see that, $\bx_{t+1}$ is also a Gaussian distribution, with mean $\mathbf{0}$ and dispersion matrix as
$\text{Var}(\bx_{t+1})=diag((s_1^{(t+1)})^2,\dots,(s_p^{(t+1)})^2)$, with
\(
    s_l^{(t+1)}= \big(1+\frac{h}{2w_l^{(t)}}(s^{(t)}_l)^2\big)^{-1}s_l^{(t)}.
\)
\end{proof}
Thus, the sequence of standard deviations $\{s_l^{(t)}\}$ form a decreasing sequence, which is bounded below. Hence, by monotone convergence theorem of real sequences \citep{rudin1964principles}, $\{s_l^{(t)}\}$ also converges. Let this limit be $s_l$. Hence $\{\bx^{t}_l\}$, the $l$-th coordinate of $\bx_t$, converges in distribution to either $\mathcal{N}(0, s_l^2)$ (if $s_l>0$) or the degenerate distribution at $0$ (if $s_l=0$). We also observe that at a population level,
\[w_l^{(t)} = \frac{\exp\{-E(x_l^{(0)}-x_l^{(t)})^2/\lambda\}}{\sum_{l^\prime=1}^p\exp\{-E(x_{l^\prime}^{(0)}-x_{l^\prime}^{(t)})^2/\lambda\}}.\]
Since $\bw_{(t)}$ is a continuous function of $\bv_t$, where \(\bv_t=(E(x_1^{(0)}-x_1^{(t)})^2,\dots,E(x_p^{(0)}-x_p^{(t)})^2).\)
Since $\bx_t$ converges in distribution, $\bv_t$ also converges, which in turn implies that $\bw^{(t)}$ converges. Let the limit be $\bw$. The following theorem asserts that $s_l^{(t)} \to 0$, as $t \to \infty$, which in turn implies that $\bx_t$ converges to $\mathbf{0}$, in distribution.
\begin{thm}
Let $\bx_t$, $s_l^{(t)}$ be as in Theorem \ref{thm4}. Then, $s_l=\lim_{t \to \infty} s_l^{(t)}=0$ and the order of convergence of $\{s_l^{(t)}\}_{t=1}^\infty$ is at least cubic. Moreover, the asymptotic rate of convergence of $\{s_l^{(t)}\}$ is $\frac{2 w_l}{h}.$
\end{thm}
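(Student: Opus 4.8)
The plan is to build directly on the recurrence \eqref{eq6} from Theorem~\ref{thm4}, together with the two facts already established in the discussion following it: the sequence $\{s_l^{(t)}\}_{t\ge 0}$ is monotone decreasing and bounded below, hence convergent to some limit $s_l\ge 0$, and the weights converge, $w_l^{(t)}\to w_l$. Because each $w_l^{(t)}$ is a softmax output it lies strictly in $(0,1)$, so the limit satisfies $w_l\in(0,1]$, and in particular $w_l>0$; this positivity will be essential throughout.

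First I would show $s_l=0$. Writing $c_t=h/(2w_l^{(t)})$, the recurrence reads $s_l^{(t+1)}=s_l^{(t)}/(1+c_t(s_l^{(t)})^2)$. Passing to the limit $t\to\infty$ and using $c_t\to c:=h/(2w_l)\in(0,\infty)$ gives $s_l = s_l/(1+c\,s_l^2)$. If $s_l>0$ I can cancel $s_l$ to obtain $1+c\,s_l^2=1$, forcing $c\,s_l^2=0$ and hence $s_l=0$, a contradiction; therefore $s_l=0$, and as already noted $\bx_t$ converges in distribution to $\mathbf 0$.

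Next I would analyse the speed of this convergence. Expanding $(1+c_t(s_l^{(t)})^2)^{-1}$ as a geometric series (valid once $s_l^{(t)}$ is small, which holds for all large $t$ since $s_l^{(t)}\to 0$) yields
\[
s_l^{(t+1)} = s_l^{(t)} - c_t\,(s_l^{(t)})^3 + c_t^2\,(s_l^{(t)})^5 - \cdots,
\]
so the per-step shrinkage is $s_l^{(t)}-s_l^{(t+1)} = \dfrac{c_t\,(s_l^{(t)})^3}{1+c_t(s_l^{(t)})^2}$, which to leading order equals $c_t\,(s_l^{(t)})^3$. This is what I would record as the ``at least cubic'' order: the amount by which the spread contracts in one iteration is proportional to the cube of the current spread, with no linear or quadratic contribution. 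For the asymptotic rate I would take the exact ratio and pass to the limit,
\[
\lim_{t\to\infty}\frac{(s_l^{(t)})^3}{\,s_l^{(t)}-s_l^{(t+1)}\,} = \lim_{t\to\infty}\frac{1+c_t(s_l^{(t)})^2}{c_t} = \frac{1}{c} = \frac{2w_l}{h},
\]
using $s_l^{(t)}\to 0$ and $c_t\to c$, which is precisely the stated rate.

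The step I expect to be the main obstacle is conceptual rather than computational: the fixed point $0$ of the update map $g(s)=s/(1+c\,s^2)$ is neutral, since $g'(0)=1$, so in the classical sense the iteration is only sublinear (indeed $\lim_t t\,(s_l^{(t)})^2 = w_l/h$, i.e.\ $s_l^{(t)}\sim\sqrt{w_l/(ht)}$). Thus I must phrase ``cubic order'' in terms of the leading nonlinear term of the update $g(s)=s-c\,s^3+O(s^5)$ and ``rate'' in terms of the limiting ratio above, rather than the degenerate $\lim_t s_l^{(t+1)}/(s_l^{(t)})^p$. A secondary technical point is that $c_t=h/(2w_l^{(t)})$ is not constant, so every limit must carry the justification $w_l^{(t)}\to w_l>0$ (with $w_l>0$ guaranteeing $c<\infty$) before $c_t$ may be replaced by $c$ when extracting the rate.
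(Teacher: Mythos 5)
Your computations are internally consistent, but they analyze the wrong recursion, and this is fatal to the proposal as a proof of the stated theorem. The ``main obstacle'' you identify --- that $g(s)=s/(1+cs^2)$ has a neutral fixed point, $g'(0)=1$, so that convergence is only sublinear ($s_l^{(t)}\sim\sqrt{w_l/(ht)}$) and ``cubic'' must be reinterpreted through one-step decrements --- is an artifact of taking the printed equation \eqref{eq6} at face value. The Gaussian computation inside the proof of Theorem~\ref{thm4} actually yields the shrinkage factor $\frac{2w_l^{(t)}/h}{2w_l^{(t)}/h+1/(s_l^{(t)})^2}=\left(1+\frac{h}{2w_l^{(t)}(s_l^{(t)})^2}\right)^{-1}$, i.e.\ the recursion
\[
s_l^{(t+1)}=\frac{(s_l^{(t)})^3}{(s_l^{(t)})^2+\frac{h}{2w_l^{(t)}}},
\]
and not $s_l^{(t)}\big/\left(1+\frac{h}{2w_l^{(t)}}(s_l^{(t)})^2\right)$: in \eqref{eq6} (and in the last display of Theorem~\ref{thm4}'s proof) the term that should read $\frac{h}{2w_l^{(t)}(s_l^{(t)})^2}$ appears as $\frac{h(s_l^{(t)})^2}{2w_l^{(t)}}$. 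For the correct recursion the origin is superattracting, $g'(0)=0$, and the paper's proof is exactly the classical computation
\[
\frac{|s_l^{(t+1)}-s_l|}{|s_l^{(t)}-s_l|^{m}}=\frac{(s_l^{(t)})^{3-m}}{(s_l^{(t)})^{2}+\frac{h}{2w_l^{(t)}}}\longrightarrow
\begin{cases}
0, & m<3,\\
\frac{2w_l}{h}, & m=3,\\
\infty, & m>3,
\end{cases}
\]
which establishes genuine (classical) cubic order with asymptotic rate $2w_l/h$ --- no decrement-based reformulation is needed, and there is no $\sqrt{w_l/(ht)}$ decay (for the true sequence, $\log(1/s_l^{(t)})$ grows like $3^t$). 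Your proposal instead proves a strictly weaker, reinterpreted statement and explicitly asserts that the theorem's classical reading is false; that assertion is incorrect for the recursion the theorem is actually about. The neutral fixed point you flagged was precisely the signal that \eqref{eq6} is inconsistent with the derivation immediately above it, and the inconsistency must be resolved in favor of the derivation.

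A second, smaller gap: your justification of $w_l>0$ --- each $w_l^{(t)}$ is a softmax output in $(0,1)$, hence the limit lies in $(0,1]$ --- is invalid, since a sequence in $(0,1)$ may well converge to $0$. The conclusion does hold here, but it requires an argument: $x_l^{(t)}$ is a deterministic contraction of $x_l^{(0)}$, so $E(x_l^{(0)}-x_l^{(t)})^2\le\sigma_l^2$, whence $w_l^{(t)}\ge e^{-\sigma_l^2/\lambda}/p$ uniformly in $t$ and the limit is strictly positive. The paper sidesteps the issue by splitting into the cases $\lim_{t\to\infty} w_l^{(t)}=0$ (where the recursion forces $s_l=0$ directly and the order claim holds with limiting rate $0$) and $\lim_{t\to\infty} w_l^{(t)}>0$. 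Either repair works, but as written your limit passages rely on a positivity that has not been established.
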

\begin{proof}
 We first consider the case $\lim_{t \to \infty} w^{(t)}_l = 0$. From equation \eqref{eq6}, we observe that $\lim_{t \to \infty} s_l^{(t+1)} =0$. Now if $\lim_{t \to \infty} w^{(t)}_l > 0$, we take limit as $t \to \infty$ on both sides of equation \eqref{eq6} and get,
\[s_l = \frac{1}{1+\frac{h}{2w_l}(s_l)^2} s_l \implies s_l = 0, \, \forall \, l=1,\dots,p.\]

We will now find the order of convergence of $s_l$ to zero. We say that the order of convergence is $m$ if $m$ is the largest integer such that 
\[r_s = \lim_{t \to \infty} \frac{|s_l^{(t+1)} - s_l|}{|s_l^{(t)} - s_l|^m} = \lim_{t \to \infty} \frac{(s_l^{(t)})^{3-m}}{(s_l^{(t)})^2+\frac{h }{2 w_l}} < \infty. \]
This occurs at least for the case $m=3$, i.e. the convergence rate of $\{s^{(t)}\}_{t=1}^\infty$ is at least cubic. If $m=3$, the asymptotic rate of convergence is given by, $r_s = \frac{2 w_l}{h}$.
\end{proof}
 Thus, the asymptotic convergence rate is smaller if $w_l$ is smaller, meaning that if a feature is deemed to have smaller feature weight, i.e. little relevance in containing the cluster structure, the convergence along that feature is faster.

Also note that since $\bx_t$ converges in distribution to $\mathbf{0}$, $w_l^{(t)}$, being a bounded continuous function of $\bx_t$ also converge in distribution to 
\[ w_l  =\frac{\exp\{-E(x_l^{(0)}-0)^2/\lambda\}}{\sum_{l^\prime=1}^p\exp\{-E(x_{l^\prime}^{(0)}-0)^2/\lambda\}}   \propto  \exp \{-\sigma_l^2/\lambda\}.\]
Thus, in limit, features with larger variances receive smaller feature weights compared to features with smaller variances.
\section{Experimental Results}
\label{exp}
In this section, we compare WBMS with  classical baselines and state-of-the-art automated clustering algorithms. To evaluate the performances, we use Normalized Mutual Information (NMI) \citep{vinh2010information} and Adjusted Rand Index (ARI)  \citep{hubert1985comparing} between the ground truth and the obtained partition. For both the indices, a value of $1$ indicates perfect clustering while a value of $0$ indicates completely random class labels. Since our algorithm is developed for automated clustering, apart from the baseline $k$-means \citep{macqueen1967some}, we compare our method with Blurring Mean Shift (BMS) \citep{5539845}, Gaussian means ($G$-means) \citep{hamerly2004learning}, Dirichlet Process means ($DP$-means) \citep{DBLP:conf/icml/KulisJ12}, Entropy Weighted $DP$-means (EWDP) \citep{paulbayesian}, and the Robust Continuous Clustering (RCC) \citep{shah2017robust} algorithm. For fair comparison against the other automated clustering methods, the number of clusters in $k$-means is supplied through the Gap statistics method \citep{tibshirani2001estimating}. It should be noted $WG$-means and RCC already entails higher computational complexity and we do not consider other alternative clustering techniques, which require $k$ as an input. Since $k$-means and $G$-means depend on the random seeding, we run each algorithm $20$ times independently on each data and the report the average performance. All the datasets are normalized ($z$-transform) before use. In our experiments, we use the Gaussian kernel. We observed that for the proposed algorithm, $h \in (0.1,1)$ and $\lambda \in [1,20]$ preserves a consistently good level of performance. All the other algorithms are tuned using their standard protocols. A pertinent ablation study is provided in the Appendix. Additional experiments and runtime comparisons also appear therein. For the sake of reproducibility, all the codes and datasets are available at \url{https://github.com/SaptarshiC98/WBMS}.
\subsection{Simulation Studies}
\label{sim}
\begin{figure*}
    \centering
    \includegraphics[height=0.24\textwidth,width=0.85\textwidth]{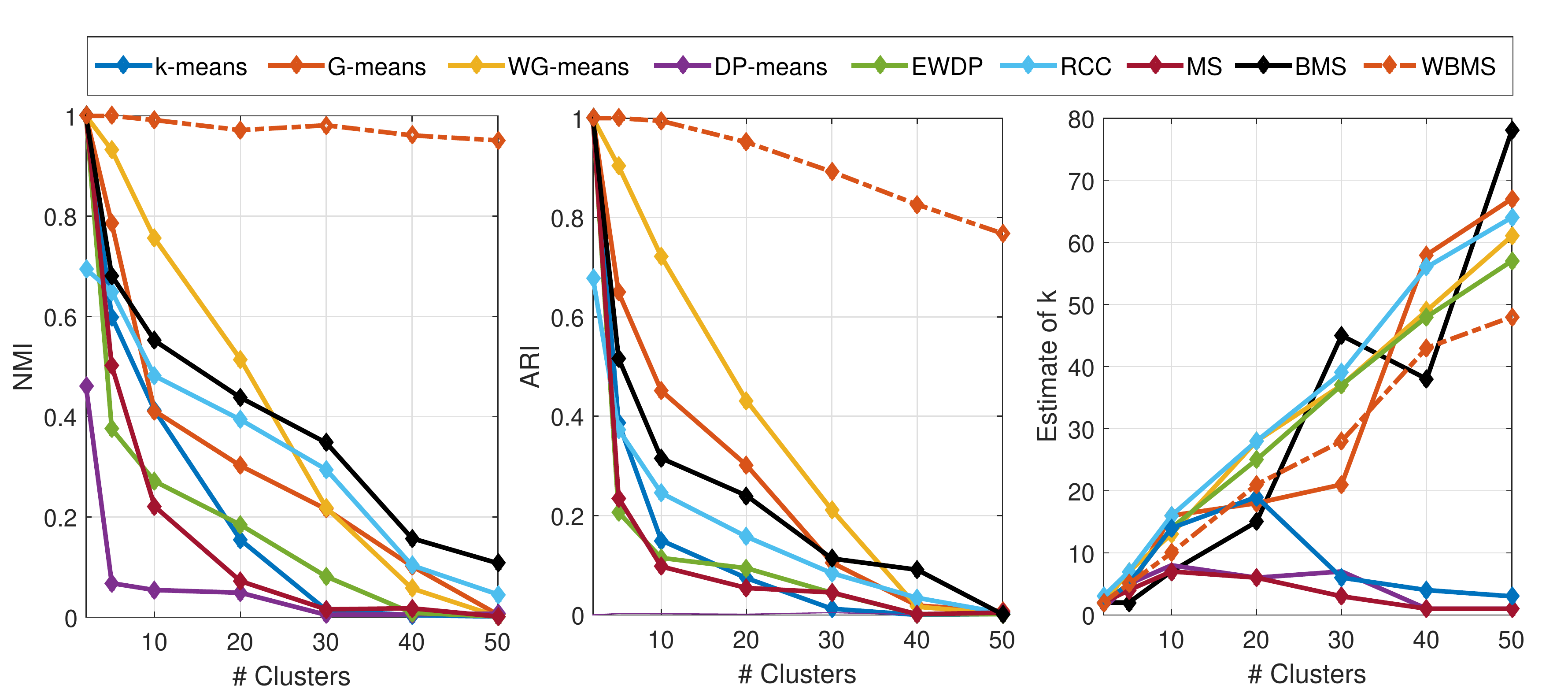}
    \caption{Performances of peer algorithms in terms of NMI (left), ARI (middle) and the estimated number of clusters (right) as the number of clusters increases in simulation 1. It is easily observed that WBMS consistently resembles the ground truth, while the other algorithms fail.}
    \label{cluster}
\end{figure*}
We now discuss the behavior of the peer algorithms through a set of simulation studies.
\subsubsection{Simulation 1: Effect of increasing $k$}
We now examine the behavior of the WBMS algorithm as the number of cluster $k$ increases. In this simulation study, we take $n=20 \times k$, $p=20$, while $k$ varies from $2$ to $50$. Let $\bTheta$ be the $k \times p$ real matrix, whose rows represent the $k$ cluster centroids. We generate $\bTheta$ as follows.  
\begin{itemize}
    \item Generate $\theta_{jl} \sim Unif(0,1)$ independently for $j=1,\dots,k$ and $l=1,\dots,5$.
    \item Set $\theta_{jl}=0$ for $j=1,\dots,k$ and $l=6,\dots,20$.
\end{itemize}
After generating $\bTheta$, we generate the $n \times p$ data matrix $\bX$ as:
\begingroup
\allowdisplaybreaks
\begin{align*}
c_i  \sim  Unif(\{1,&\dots,k\});   x_{il} \sim \mathcal{N}_1(\theta_{c_i, l},0.02^2) \text{ if } l \in \{1,\dots,5\}\\
    x_{il} & \sim \mathcal{N}_1(0,1) \text{ if } l \in \{6,\dots,20\}.
\end{align*}
\endgroup
From the data generation procedure, it is easy to observe that only the first five features contain the cluster structure of the data. We run all the peer algorithms on each of the datasets. Fig.~\ref{cluster} shows the average NMI and ARI values between the ground truth and the obtained partition. We also plot the average estimated number of clusters ($\hat{k}$) against the true number of clusters ($k$). It can be easily observed that WBMS consistently outperforms the other peer algorithms not only in terms of clustering performance but also in finding the true number of clusters.
\subsubsection{Simulation 2: Effect of increasing \textit{p}}
\begin{figure*}[ht]
    \centering
    \includegraphics[height=0.24\textwidth,width=0.85\textwidth]{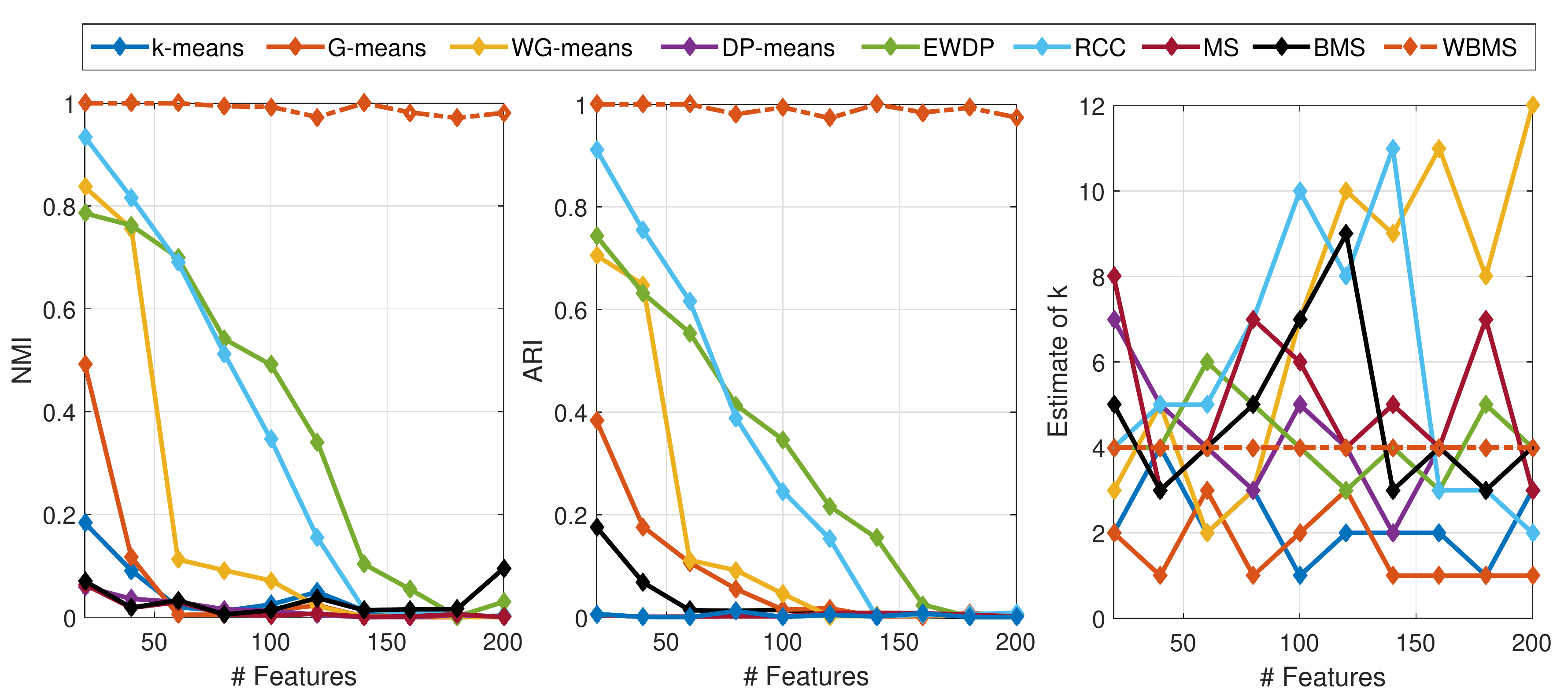}
    \caption{Performances of peer algorithms in terms of NMI (left), ARI (middle) and the estimated number of clusters (right) as the number of features increases in simulation 2. It is easily observed that WBMS consistently resembles the ground truth, while the other algorithms fail as the dimensionality of the data increases. }
    \label{feature}
\end{figure*}
\begin{figure*}[ht]
    \centering
    \includegraphics[height=0.24\textwidth,width=0.90\textwidth]{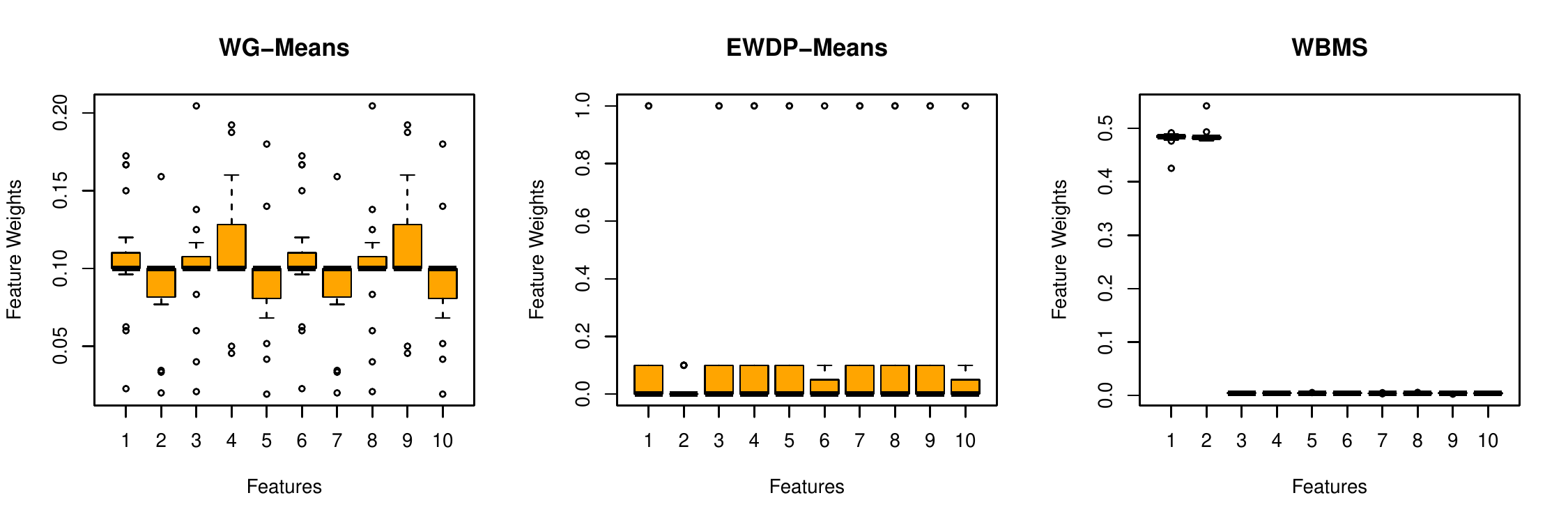}
    \caption{Boxplot shows that WBMS consistently identifies true features while WG-means and EWDP-means fail to do so.}
    \label{weight}
\end{figure*}
This experiment assesses the performance of WBMS as the number of features grows. We generate $n=100$ observations with $k=4$ clusters. We increase the dimension from $p=20$ to $p=200$ at the difference of $20$ to observe the effect of growing features. In each case, the number of informative features is fixed at $5\%$ of the total number of features with an exception for the case $p=20$, where we take $2$ informative features. The clusters are spherical without any overlaps and are generated from Normal distributions with variance $0.3$ and means generated from $Unif(0,1)$ distribution. The non-informative features are generated from $\mathcal{N}_1(0,1)$. Fig.~\ref{feature} compares the NMI, ARI and estimated number of clusters for all the peer algorithms. From Fig.~\ref{feature}, we can easily observe that among all peer algorithms, not only the WBMS algorithm performs best in terms of NMI and ARI values, it also estimates the number of clusters perfectly.

\subsubsection{Simulation 3: Feature Selection}
\begin{figure*}[!ht]
    \centering
    \includegraphics[height=0.3\textwidth,width=0.9\textwidth]{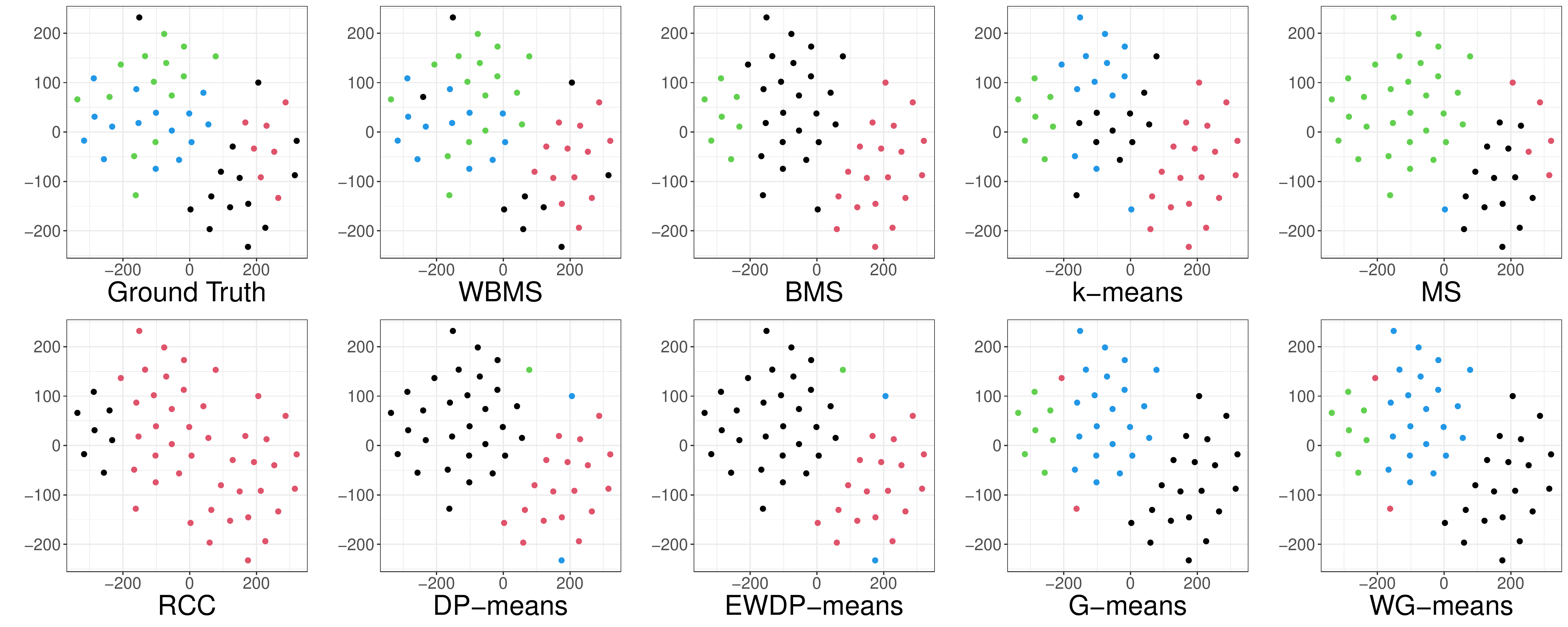}
    \caption{\texttt{t-SNE} plots for the GLIOMA dataset, color-coded by the partitions obtained by each peer algorithm.}
    \label{glioma}
\end{figure*}

We now examine the feature weighting properties of WBMS. Here, we take $n=100$, $p=10$, $k=4$ and follow the data generation procedure described in Simulation $2$. Without loss of generality, we take the first two features to be informative that actually contain the cluster structure. We compare the feature weights obtained by WBMS against those obtained by $WG$-means and EWDP. We record the feature weights obtained by these two algorithms along with proposed WBMS over 100 replicates of the data. The box plots for these $100$ optimal feature weights are shown in Fig. \ref{weight} for all the three algorithms. The proposed WBMS successfully assigns almost all weights to informative features $1$ and $2$, even in this low signal-to-noise-ratio situation. Meanwhile, its peers $WG$-means and EWDP fail to do so. 

\subsection{Case Study on Glioma Data}
We now evaluate the performance of our algorithm with a specific case study on the microarray dataset, Glioma \citep{nutt2003gene}. The dataset comprises of $4434$ gene expression levels, collected over $50$ samples. There are four natural classes in the data viz. Glioblastomas (CG), Non-cancer Glioblastomas (NG), Cancer Oligodendrogliomas (CO), and Non-cancer Oligodendrogliomas (NO). In our experimental study, we compare our proposed WBMS algorithm to all the algorithms described in Section \ref{sim}. To visualize the clustering results, we perform a \texttt{t-SNE} \citep{maaten2008visualizing} and show the resulting embedding in Fig.~\ref{glioma}, color-coded with the partitions obtained from the peer algorithms. WBMS closely represents the ground truth compared to its competitors. This is also easily seen from Table~\ref{real}.

\subsection{Performance on Real Data Benchmarks}
\begin{table}[h]
    \centering
    \begin{tabular}{l c c c}
    \hline
    Datasets & \# Datapoints & \# Features & \# Clusters \\
    \hline
      GLIOMA   & 50 & 4434 & 4  \\
       Appendicitis  & 107 & 7 & 2 \\
       Zoo & 101 & 16 & 7  \\
       Mammographic & 830 & 5 & 2\\
       Yale & 165 & 1024 &  15\\
       nci9 & 60 & 9712 & 9 \\
       Lymphoma & 62 & 4026 & 3\\
       Movement Libras & 360 & 90 & 15 \\
       GCM & 191 & 16063 & 15 \\
       \hline
    \end{tabular}
    \caption{The dimensions of various real data, used in our experiments along with the true number of clusters.}
    \label{data_dimension}
\end{table}
\begin{table*}[!ht]
    \centering
    \resizebox{\textwidth}{!}{
    \begin{tabular}{l c c c c c c c c c c}
    \hline
    Datasets & Method & $k$-Means & G-Means & WG-Means & DP-Means & EWDP & RCC & MS & BMS & WBMS\\
    \hline
    GLIOMA        & NMI & $0.499$ & $0.522$ & $0.517$& $0.576$ & $0.675$  & $0.113$ & $0.580$ & $0.546$ & $\mathbf{0.706}$\\
        & ARI & $0.328$ & $0.367$ & $0.373$ & $0.416$ & $0.598$ & $0.004$ & $0.429$ & $0.398$ & $\mathbf{0.618}$\\
    \hline
    Appendicitis  & NMI & $0.157$ & $0.165$ & $0.185$ & $0.158$ &  $0.189$ & $0.193$ & $0.008$ & $0.195$ & $\mathbf{0.249}$\\
   & ARI & $0.230$ & $0.169$ & $0.188$  &  $0.231$& $0.188$ & $0.000$ & $0.014$ & $0.223$ & $\mathbf{0.434}$\\
    \hline     
    Zoo           & NMI & $0.741$ &  $0.801$ & $0.749$ & $0.611$ & $0.859$  & $0.557$ & $0.706$ & $0.841$ &  $\mathbf{0.925}$\\
       & ARI & $0.459$ & $0.646$& $0.559$ & $0.452$ & $0.872$ & $0.039$  & $0.436$ & $0.867$ & $\mathbf{0.953}$\\
    \hline
    Mammogra-  & NMI & $0.231$ & $0.181$ & $0.240$ & $0.191$ & $0.273$ & $0.181$ & $0.181$ & $0.008$& $\mathbf{0.348}$\\
      phic & ARI & $0.292$ & $0.209$ & $0.293$ & $0.257$ & $0.247$ & $0.001$& $0.150$ & $0.001$ & $\mathbf{0.351}$\\
    \hline
    Yale          & NMI & $0.539$ & $0.417$ & $0.521$ & $0.324$ & $0.334$ & $0.268$ & $0.548$ & $0.222$ & $\mathbf{0.693}$\\
                  & ARI & $0.271$ & $0.173$ & $0.227$ & $0.040$ & $0.061$ & $0.031$ & $0.069$& $0.027$ & $\mathbf{0.485}$\\
    \hline
    nci9          & NMI & $0.458$ & $0.394$ & $0.394$ & $0.181$ & $0.211$ & $0.143$ & $0.346$ & $0.394$ & $\mathbf{0.686}$ \\
    & ARI & $0.187$ & $0.100$ & $0.100$ & $0.005$ & $0.018$ & $0.005$ & $0.011$ & $0.086$ & $\mathbf{0.419}$ \\
    \hline
    Lymphoma & NMI & $0.441$ & $0.592$ & $0.690$ & $0.518$ & $0.648$ & $0.243$ & $0.241$ & $0.595$ & $\mathbf{0.778}$\\
    & ARI & $0.269$ & $0.340$ & $0.463$ & $0.088$ & $0.431$ & $0.001$& $0.000$ & $0.458$ & $\mathbf{0.604}$\\
    \hline
    Movement  & NMI & $0.591$ & $0.231$ & $0.328$ & $0.333$ & $0.465$ & $0.639$ & $0.245$ & $0.503$ & $\mathbf{0.663}$\\
    Libras & ARI & $0.309$ & $0.068$ & $0.123$ & $0.113$ & $0.217$ & $0.014$ & $0.090$ & $0.185$ & $\mathbf{0.532}$\\
    \hline
        GCM  & NMI & $0.532$ & $0.484$ & $0.497$ & $0.025$ & $0.497$ & $0.637$ & $0.456$ & $0.649$ & $\mathbf{0.833}$\\
                 & ARI & $0.288$ & $0.248$ & $0.266$ & $0.002$ & $0.266$ & $0.361$ & $0.413$ & $0.536$ & $\mathbf{0.714}$\\
    \hline
    \end{tabular}
    }%
     \caption{Performance Analysis on Real Life Datasets in terms of NMI \& ARI values.}
    \label{real}
\end{table*}
To further demonstrate the efficacy of our proposal, we compare WBMS with the peer algorithms on eight benchmark real datasets. The datasets are taken from the UCI machine learning repository\footnote{\url{https://archive.ics.uci.edu/ml/index.php}} \citep{Dua:2019}, Keel repository\footnote{\url{https://sci2s.ugr.es/keel/datasets.php}} \citep{alcala2011keel} 
and ASU feature selection repository\footnote{\url{http://featureselection.asu.edu/}} \citep{li2018feature}. The microarray GCM data is collected from \citep{ramaswamy2001multiclass}. The data dimensions are reported in Table \ref{data_dimension}. We follow the same computational protocols as stated at the beginning of Section~\ref{exp}. The performance of each algorithm, in terms of NMI and ARI, is reported in Table~\ref{real}, which clearly indicate the superior performance of WBMS on each of the nine real data benchmarks. 
\section{Discussions}
Despite decades of advancement, there is no proper solution for finding the number of clusters and feature weights simultaneously in the clustering problem. For high dimensional datasets, which contain a significant number of non-informative features, most of the existing automated clustering methods fail to identify the discriminating features as well as the number of clusters, resulting in poor performance. To circumvent such difficulties, in this paper, we put forth a novel clustering algorithm known as the Weighted Blurring Mean Shift (WBMS), which not only provides an efficient feature representation of the data but also detects the number of clusters simultaneously. WBMS comes with closed-form updates and rigorous convergence guarantees. We have also mathematically proved that under the nominal assumption of normality of the clusters, WBMS has at least a cubic convergence rate. Through detailed experimental analysis on simulated and real data, WBMS is particularly shown to be useful for high-dimesnional data with many clusters.

\appendix
\section{Optimal Parameter values for Real-Life Datasets}
In this section, we provide the optimal parameter values for real-life datasets. The experimental results for WBMS that are provided in Table \ref{real} are based on the optimal parameter values provided in Table \ref{parameter}.
\begin{table}[h]
    \centering
    \begin{tabular}{l c c}
    \hline
       Datasets & $h$ & $\lambda$\\
       \hline
       GLIOMA & $0.5$ & $1$ \\
       Appendicitis & $1$ & $10$ \\
       Zoo & $0.8$ & $20$\\
       Mammographic & $0.8$ & $10$\\
       Yale & $0.1$ & $10$\\
       nci9 & $0.1$ & $5$ \\
       Lymphoma & $0.5$ & $5$\\
       Movement Libras & $0.1$ & $10$\\
    \hline
    \end{tabular}
    \caption{Optimal Parameter Values for Real-Life Datasets}
    \label{parameter}
\end{table}

\section{Ablation Studies}
\subsection{Abalation Study on $h$}
In this section, we conduct an ablation study to assess the performance of the proposed WBMS algorithm for different values of $h$. We take $h=0.1, 0.5, 0.8$ and $1$ for our experimental study. The algorithm is run according to the procedure described in Section $4$. As a cluster validity index, we use the ARI values between the ground truth and the obtained partition. We choose $\lambda$ corresponding to the highest ARI value. Table \ref{ablation_h} gives the ARI values for all the real-life datasets used in our experiments for different values of $h$. 

\begin{table}[h]
    \centering
    \begin{tabular}{l c c c c}
    \hline
    Datasets & $h=0.1$ & $h=0.5$ & $h=0.8$ & $h=1$\\
    \hline
      GLIOMA   & $0.587$ & $0.618$ & $0.618$ & $0.524$\\
       Appendicitis  & $0.218$ & $0.324$ & $0.401$ & $0.434$ \\
       Zoo & $0.953$ & $0.912$ & $0.754$ & $0.754$ \\
       Mammographic & $0.212$ & $0.295$ & $0.351$ &  $0.320$\\
       Yale & $0.485$ & $0.414$ & $0.234$ &  $0.001$\\
       nci9 & $0.419$ & $0.419$ & $0.340$ & $0.108$\\
       Lymphoma & $0.589$ & $0.604$ & $0.544$ & $0.330$ \\
       Movement Libras & $0.532$ & $0.414$ & $0.021$ & $0.001$\\
       GCM & $0.714$ & $0.531$ & $0.531$ & $0.562$\\
       \hline
    \end{tabular}
    \caption{Average ARI values for WBMS on real-datasets for different values of $h$}
    \label{ablation_h}
\end{table}

\begin{table}[h]
    \centering
    \begin{tabular}{l c c c c}
    \hline
    Datasets & $\lambda=1$ & $\lambda=5$ & $\lambda=10$ & $\lambda=20$\\
    \hline
      GLIOMA   & $0.618$ & $0.618$ & $0.618$ & $0.510$\\
       Appendicitis  & $0.257$ & $0.352$ & $0.434$ & $0.398$ \\
       Zoo & $0.717$ & $0.717$ & $0.906$ & $0.953$ \\
       Mammographic & $0.198$ & $0.202$ & $0.351$ &  $0.351$\\
       Yale & $0.055$ & $0.414$ & $0.485$ &  $0.423$\\
       nci9 & $0.210$ & $0.419$ & $0.419$ & $0.378$\\
       Lymphoma & $0.167$ & $0.604$ & $0.503$ & $0.503$ \\
       Movement Libras & $0.251$ & $0.376$ & $0.532$ & $0.532$\\
       GCM & $0.510$ & $0.714$ & $0.714$ & $0.714$\\
       \hline
    \end{tabular}
    \caption{Average ARI values for WBMS on real-datasets for different values of $\lambda$}
    \label{ablation_lambda}
\end{table}

\subsection{Abalation Study on $\lambda$}

For each of the optimal $h$, described in the previous section, we assess the performance of WBMS for different values of $\lambda$. Since, $\lambda \in [1,20]$, we take $\lambda=1,5,10$ and $20$. For each different value of $\lambda$, we run the algorithm similar to the procedure described in Section $4$. We choose $h$ corresponding to the highest ARI value. For ties, any value of $h$ between the ties can be taken as optimal. Table \ref{ablation_lambda} summarizes the ARI values for all real-life datasets used in our experiments for different values of $\lambda$.

\begin{figure}[t]
    \centering
    \includegraphics[height=0.4\textwidth,width=0.48\textwidth]{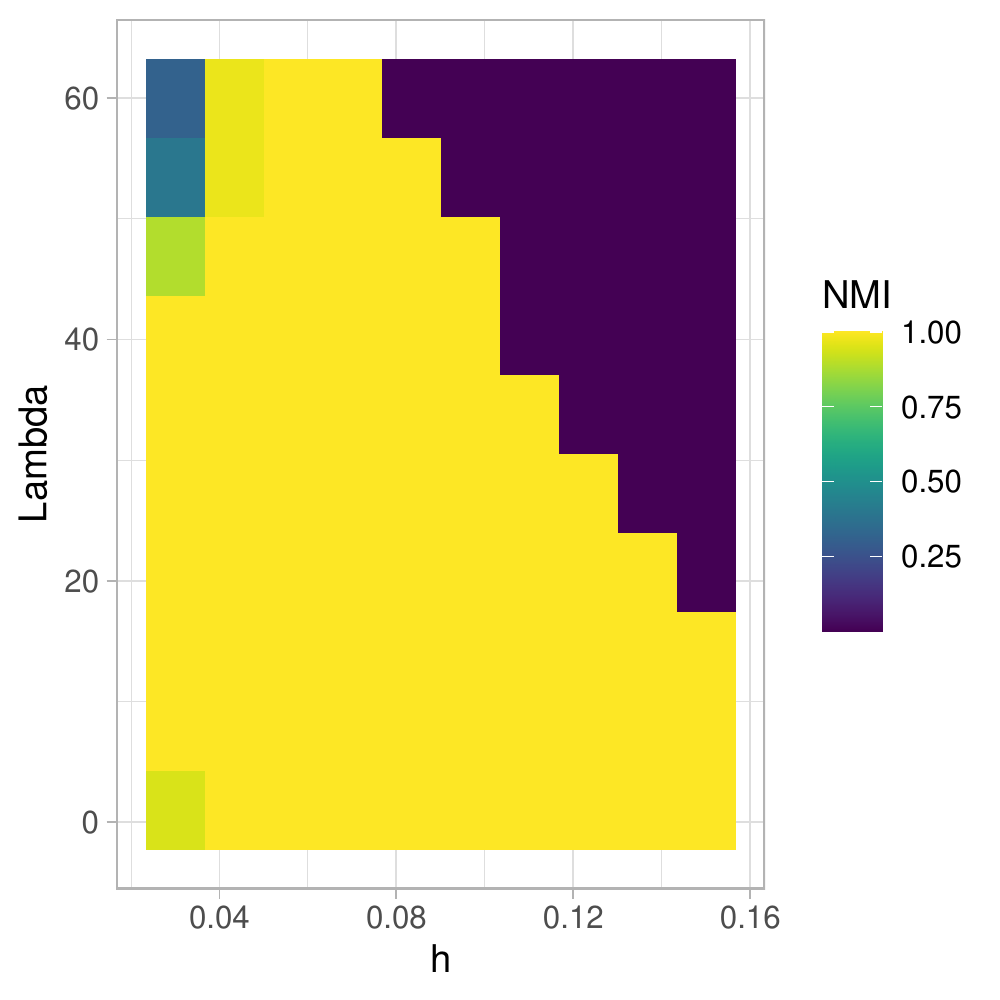}
    \caption{Heatmap of NMI values, plotted against different values of $h$ and $\lambda$, showing that the WBMS is able to recover the perfect clustering (colored in yellow) over a considerable range of the hyper-parameter values.}
    \label{heatmap}
\end{figure}

\begin{figure}[ht]
        \begin{subfigure}[t]{0.45\textwidth}
    \centering
       \includegraphics[height=\textwidth,width=\textwidth]{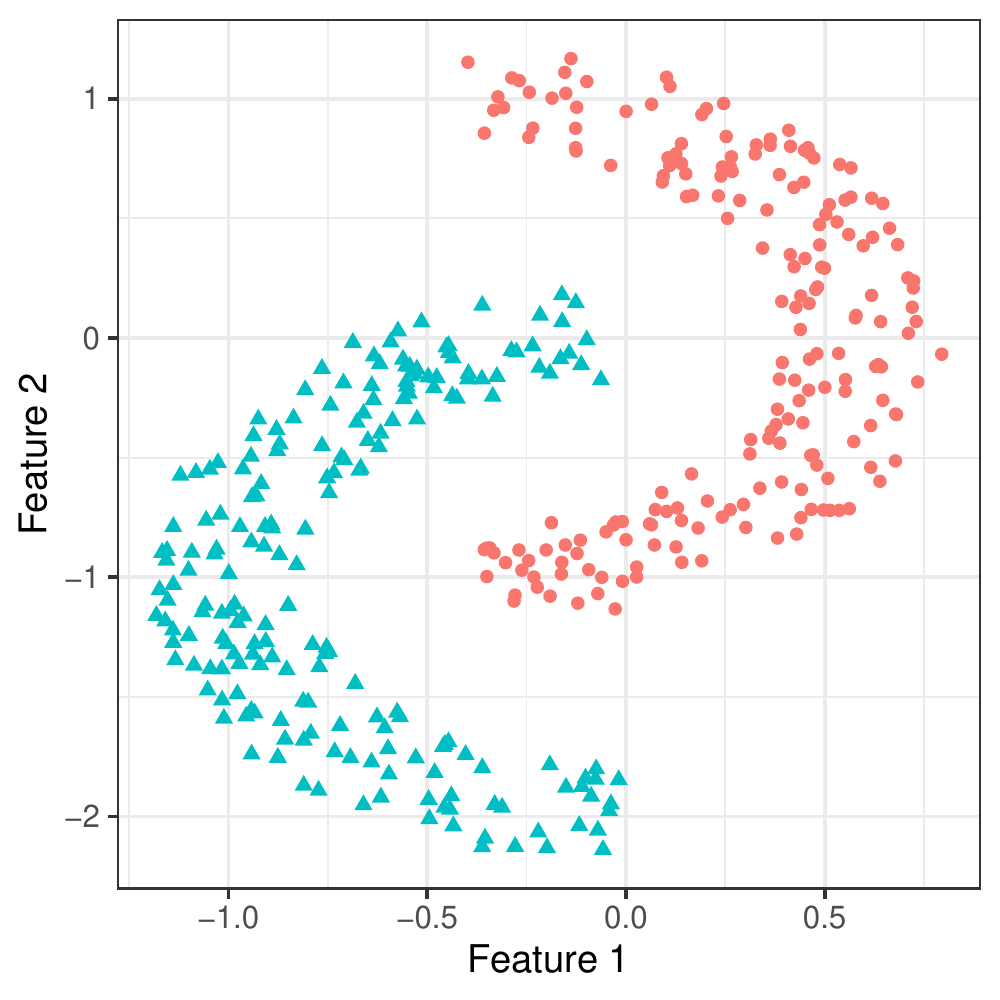}
    \caption{Two-moons data}
        \label{fig:2moon}
        \end{subfigure}
        ~
        \begin{subfigure}[t]{0.45\textwidth}
    \centering
        \includegraphics[height=\textwidth,width=\textwidth]{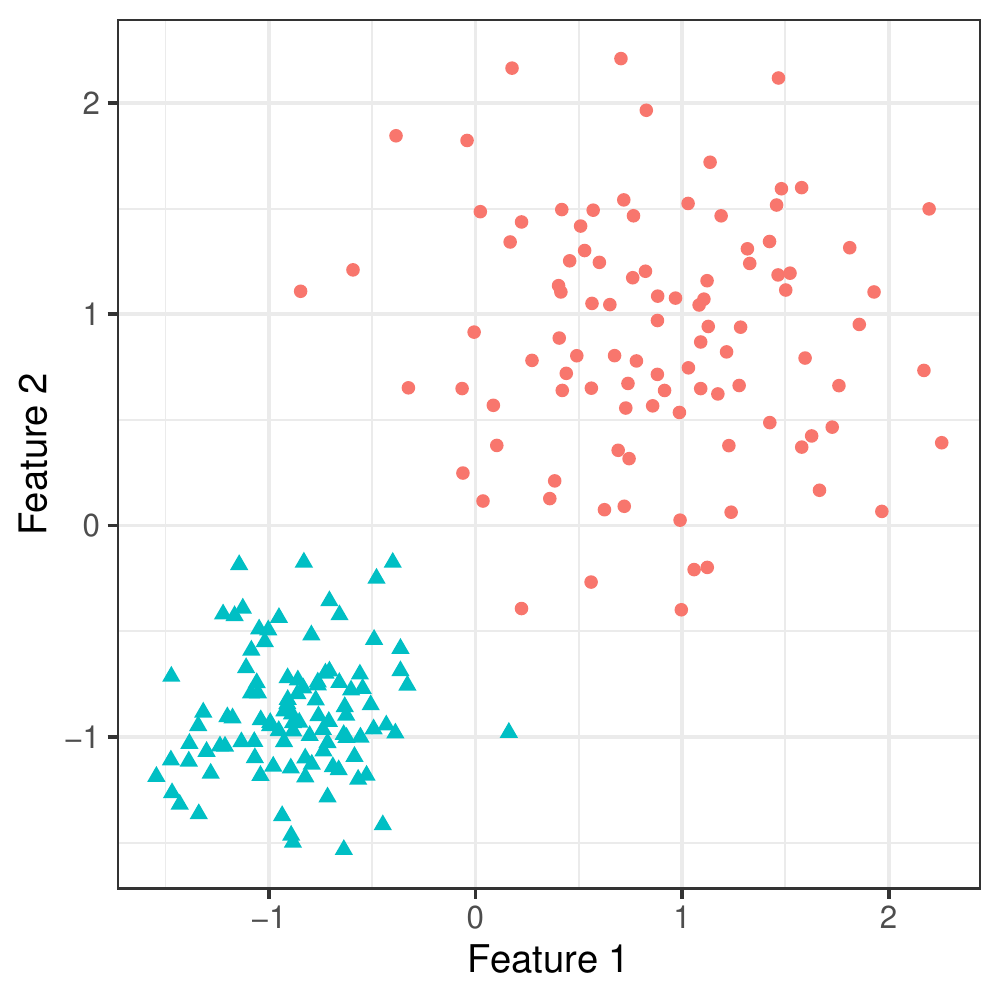}
        \caption{Clusters with varied densities}
        \end{subfigure}
    \caption{Performance of WBMS on some classic synthetic data.}
    \label{eg1}
\end{figure}

\subsection{Sensitivity Analysis}
We also conducted a sensitivity analysis on a toy example with two clusters, each containing 100 points. The first two features of the dataset fully contains the cluster structure of the data, while the rest 30 are simulated independently from a standard normal distribution. For different values of $h$ and $\lambda$, we run the WBMS and compute the NMI values between the ground truth and the obtained partition. In Fig. \ref{heatmap}, we show the heatmap of NMI values, plotted against different values of $h$ and $\lambda$. The hetmap clearly shows that the WBMS is able to recover the perfect clustering (colored in yellow) over a considerable range of the hyper-parameter values.

\section{Additional Experiments}
In this section we show the performance of WBMS on some standard datasets, where mean shift performs perfect clustering, whereas classical centroid-based algorithms fail to do so. To make the problems harder, we appended an additional 30 features, simulated from a standard normal distribution to the data matrix. In Fig. \ref{eg1}, it is easily observed that WBMS performs a perfect clustering on these datasets. 
\section{Run Time Analysis on Real Data}
\begin{table*}[ht]
    \centering
    \caption{Runtime Analysis on Real Life Datasets (Time in Seconds)}
    \label{real}
   \resizebox{\textwidth}{!}{
    \begin{tabular}{l c c c c c c c c c}
    \hline
    Datasets & $k$-Means & G-Means & WG-Means & DP-Means & EWDP & RCC & MS & BMS & WBMS\\
    \hline
    GLIOMA   & $2.19$ & $21.57$ & $30.45$ & $17.48$ & $17.73$  & $23.60$ & $26.96$ & $20.97$ & $20.18$\\
    Appendicitis  & $0.56$ & $3.50$ & $3.42$ & $5.64$ &  $4.96$ & $4.65$ & $2.13$ & $1.36$ & $1.17$\\
                  
    Zoo  & $0.89$ & $2.81$  & $6.06$ & $2.70$ & $3.36$ & $4.29$ & $2.02$ & $1.36$ & $1.29$\\
    Mammographic  & $2.78$ & $5.27$ & $5.78$ & $4.29$ & $5.61$ & $52.69$ & $103.98$ & $81.04$ & $86.21$\\
              
    Yale    & $8.45$ & $16.06$ & $28.63$ & $25.67$ & $27.18$ & $37.50$ & $41.92$ & $189.98$ & $209.18$\\
                  
    nci9     & $9.08$ & $17.25$ & $26.07$ & $29.19$ & $28.96$ & $63.79$ & $68.65$ & $53.48$ & $50.95$ \\
                  
    Lymphoma & $1.31$ & $25.01$  & $34.81$ & $33.42$ & $35.70$ & $64.34$ & $56.47$ & $47.80$ & $42.16$\\
                  
    Movement Libras & $1.98$ & $17.41$ & $21.97$ & $18.09$ & $20.69$ & $43.28$ & $38.27$ & $22.80$ & $24.17$\\
            
    GCM & $12.81$ & $80.67$ & $113.10$ & $153.74$ & $147.32$ & $90.56$ & $308.09$ & $340.61$ & $305.76$\\             
    \hline
    \end{tabular}
    }%
\end{table*}
\end{document}